\title{Bias-Variance Trade-off in Physics-Informed Neural Networks with Randomized Smoothing for High-Dimensional PDEs}
\newtheorem{theorem}{Theorem}[section]
\newcommand{\bx}{\boldsymbol{x}}
\begin{document}
\author{Zheyuan Hu\thanks{Equal Contribution} \ \thanks{Department of Computer Science, National University of Singapore, Singapore, 119077 (\href{mailto:e0792494@u.nus.edu}{e0792494@u.nus.edu},\href{mailto:kenji@nus.edu.sg}{kenji@nus.edu.sg})} \and Zhouhao Yang\footnotemark[1] \ \footnotemark[2] \and Yezhen Wang\footnotemark[1] \ \footnotemark[2] \and George Em Karniadakis\thanks{Division of Applied Mathematics, Brown University, Providence, RI 02912, USA (\href{mailto:george\_karniadakis@brown.edu}{george\_karniadakis@brown.edu})} \ \thanks{Advanced Computing, Mathematics and Data Division, Pacific Northwest National Laboratory, Richland, WA, United States} \and Kenji Kawaguchi\footnotemark[2]}

\date{}

\maketitle
\begin{abstract}
Physics-Informed Neural Networks (PINNs) have triggered a paradigm shift in scientific computing, leveraging mesh-free properties and robust approximation capabilities. While proving effective for low-dimensional partial differential equations (PDEs), the computational cost of PINNs remains a hurdle in high-dimensional scenarios. This is particularly pronounced when computing high-order and high-dimensional derivatives in the physics-informed loss. Randomized Smoothing PINN (RS-PINN) introduces Gaussian noise for stochastic smoothing of the original neural net model, enabling the use of Monte Carlo methods for derivative approximation, which eliminates the need for costly automatic differentiation. Despite its computational efficiency, especially in the approximation of high-dimensional derivatives, RS-PINN introduces biases in both loss and gradients, negatively impacting convergence, especially when coupled with stochastic gradient descent (SGD) algorithms. We present a comprehensive analysis of biases in RS-PINN, attributing them to the nonlinearity of the Mean Squared Error (MSE) loss as well as the intrinsic nonlinearity of the PDE itself. We propose tailored bias correction techniques, delineating their application based on the order of PDE nonlinearity. The derivation of an unbiased RS-PINN allows for a detailed examination of its advantages and disadvantages compared to the biased version. Specifically, the biased version has a lower variance and runs faster than the unbiased version, but it is less accurate due to the bias. To optimize the bias-variance trade-off, we combine the two approaches in a hybrid method that balances the rapid convergence of the biased version with the high accuracy of the unbiased version. In addition to methodological contributions, we present an enhanced implementation of RS-PINN. Extensive experiments on diverse high-dimensional PDEs, including Fokker-Planck, Hamilton-Jacobi-Bellman (HJB), viscous Burgers', Allen-Cahn, and Sine-Gordon equations, illustrate the bias-variance trade-off and highlight the effectiveness of the hybrid RS-PINN. Empirical guidelines are provided for selecting biased, unbiased, or hybrid versions, depending on the dimensionality and nonlinearity of the specific PDE problem.
\end{abstract}

\section{Introduction}
Physics-Informed Neural Networks (PINNs) \cite{raissi2019physics} have revolutionized the scientific computing field thanks to their mesh-free properties, robust approximation, rapid convergence, and strong generalization capabilities \cite{jagtap2020adaptive,kawaguchi2016deep,kawaguchi2017generalization}. 
Although PINNs have proven effective in solving many low-dimensional PDEs, the computational cost remains significant in high-dimensional scenarios. This is particularly evident when calculating high-order, high-dimensional derivatives of the neural network model concerning its inputs, especially in the context of computing the physics-informed loss. The intrinsic value of unlocking the potential of PINN lies in their mesh-free training, which allows them to overcome the curse-of-dimensionality. The capability to address high-dimensional PDE problems holds immense significance, offering substantial value in addressing a myriad of practical applications, e.g., the Hamilton-Jacobi-Bellman (HJB) equation in control theory, the Black-Scholes equation in mathematical finance, and the Schr\"{o}dinger equation in quantum physics. 

Among the variants of PINNs, the randomized smoothing PINN is a promising approach, see \cite{he2023learning}.
Specifically, Randomized Smoothing PINN (RS-PINN) \cite{he2023learning} introduces Gaussian noise for stochastic smoothing of a neural network model, allowing its derivatives with respect to inputs to be expressed as expectations. This enables the model and its derivatives to be approximated using Monte Carlo methods, circumventing the challenges associated with high-order, high-dimensional derivatives, where computation by automatic differentiation can be prohibitively expensive.
While RS-PINN presents an efficient backpropagation-free method for PINN parameterization and training, its reliance on Monte Carlo to approximate expectations introduces biases in both its loss and gradients. Given that RS-PINNs are commonly coupled with stochastic gradient descent (SGD) algorithms, such as Adam \cite{kingma2014adam}, the unbiasedness of stochastic gradients is crucial for the convergence of RS-PINNs. In fact, the unbiasedness of stochastic gradients constitutes the fundamental assumption for the convergence of SGD. This phenomenon hinders the model from converging to the optimal point, making RS-PINNs perform much worse than vanilla PINNs based on automatic differentiation.

In this paper, we conduct an in-depth analysis of the sources of bias in RS-PINNs, stemming from the nonlinearity of the commonly used Mean Squared Error (MSE) loss in PINN as well as the inherent nonlinearity of the PDE itself. We demonstrate how to correct these biases separately and extend our formulation to the specific order of PDE nonlinearity. Specifically, for nonlinear PDEs with various orders, we illustrate how their biases can be corrected differently. Overall, the essence of bias correction lies in the re-sampling using distinct Gaussian random samples.
After derivation of the unbiased version of RS-PINN, we analyze the advantages and disadvantages of biased and unbiased versions. To this end, the biased version exhibits faster running speed, while under the same sample size, the unbiased version tends to have a larger variance, leading to the exploration of the bias-variance trade-off. We discuss various scenarios where the unbiased/biased version might perform better and propose a combination of both to achieve convergence speeds comparable to the biased version and the accuracy of the unbiased version. Concretely, in the optimization’s initial stages, we use the biased version to rapidly converge the model to a reasonably good point. Once the loss of the biased version ceases to decrease, we transition to the unbiased version for fine-tuning. In addition to our methodological contributions, we also present an improved implementation of RS-PINN.

Finally, through extensive experiments on several high-dimensional PDEs, including the linear Fokker-Planck PDE, the nonlinear HJB equation, the viscous Burgers' equation, the Allen-Cahn equation, and the Sine-Gordon equation in different high-dimensional scenarios, we illustrate the bias-variance trade-off and how the hybrid version adeptly assimilates the strengths and weaknesses of both versions to achieve optimal results. Through experiments, we also empirically provide guidelines for the usage of biased, unbiased, and hybrid versions, which are dependent on the dimensionality as well as the nonlinearity of the PDE problem.

The rest of this paper is arranged as follows. We discuss related work in Section 2. We provide an introduction to RS-PINNs in Section 3. Then, we introduce the bias correction techniques and our main algorithms in Section 4. Computational experiments are conducted in Section 5, and we conclude the paper in Section 6.

\section{Related Work}
\textbf{Randomized Smoothing}.
Randomized smoothing was initially proposed to tackle the adversarial robustness problem via certified robustness in neural networks, especially in image classification \cite{pmlr-v97-cohen19c,lecuyer2019certified}.
Later, it was extended to train PINNs without stacked backpropagation \cite{he2023learning}, which avoids the huge computational cost, especially in high-dimensional PDEs. 
The generalization property of the RS-PINN can be understood via the information bottleneck theory \cite{icml2023kzxinfodl}. It is expected to improve the generalization property by reducing the mutual information between the input and the hidden layer via the additionally injected noise.
More recently, randomized smoothing has also been applied to backpropagation-free federated learning \cite{feng2023does}.

\textbf{Backpropagation-Free PINNs}. In \cite{CHIU2022114909}, the authors proposed a coupled-automatic-numerical PINN (CAN-PINN), which combines automatic differentiation (AD) and numerical differentiation (ND) to become both accurate like AD and efficient like ND.
The authors of \cite{lv2021hybrid} proposed a hybrid finite difference PINN (HFD-PINN), which adopts AD for smooth scales and a weighted essentially non-oscillatory (WENO) scheme to capture discontinuity.
Fractional PINN (fPINN) \cite{pang2019fpinns} was proposed to solve fractional advection-diffusion equations, where the ND scheme is adopted for fractional differentiation.
The Deep Galerkin method (DGM) \cite{sirignano2018dgm} proposed a Monte-Carlo-based algorithm for fast second-order derivatives calculation.
Taylor mode AD \cite{bettencourt2019taylormode} was proposed to mitigate the exponential computational burden with increasing order of derivatives, which is currently available in the Jax framework.

\textbf{High-Dimensional PDE Solver}. In the broader field of high-dimensional PDE solvers, numerous attempts have been made.
The authors of \cite{wang20222} proved the importance of $L^\infty$ loss in solving high-dimensional Hamilton-Jacobi-Bellman equations.
Separable PINN \cite{cho2022separable} adopts a separable structure, enabling the residual point to be a tensor product of per-dimension points, thereby expanding the batch size. However, for problems exceeding ten dimensions, memory usage becomes a significant concern.
DeepBSDE \cite{han2018solving, han2017deep} and its extensions \cite{beck2019machine, chan2019machine,henry2017deep,hure2020deep,ji2020three} are based on the classical BSDE interpretation of certain high-dimensional parabolic PDEs, and deep learning models are employed to approximate the unknowns in the formulation.
The deep splitting method \cite{beck2021deep} integrates the classical splitting method with deep learning.
FBSNN \cite{raissi2018forward} connects high-dimensional parabolic PDEs with forward-backward stochastic differential equations and adopts deep learning for approximating the unknown solution.
The multilevel Picard methods \cite{beck2020overcoming,beck2020overcoming_ac,becker2020numerical,hutzenthaler2020overcoming, hutzenthaler2021multilevel} are a nonlinear extension of Monte Carlo that can provably solve parabolic PDEs under certain constraints. The authors of 
\cite{wang2022tensor, wang2022solving} proposed tensor neural networks, which adopt a separable structure for cheap numerical integration in solving high-dimensional Schr\"{o}dinger equations.
More recently, SDGD \cite{hu2023tackling} was proposed to sample the dimension in PDEs for scaling up and speeding up high-dimensional PINNs.

\textbf{Physics-Informed Machine Learning}. The algorithmic concepts in this paper are based on Physics-Informed Machine Learning \cite{karniadakis2021physics}, especially PINNs \cite{raissi2019physics}, which utilize neural networks as surrogate models for PDE solution approximation and optimize the boundary and residual losses, which are theoretically grounded to help the neural network model discover the correct solution \cite{hu2021extended,mishra2020estimates,shin2020convergence}.

\section{Preliminary}
\subsection{Physics-Informed Neural Networks (PINNs)}
This paper focuses on solving the following partial differential equations (PDEs) defined on a domain $\Omega \subset \mathbb{R}^d$:
\begin{equation}\label{eq:PDE}
\begin{aligned}
\mathcal{B}u(\bx)=B(\bx) \ \text{on}\ \Gamma, \qquad
\mathcal{L}u(\bx)=g(\bx) \ \text{in}\ \Omega,
\end{aligned}
\end{equation}
where $\mathcal{L}$ and $\mathcal{B}$ are the differential operators for the residual condition in $\Omega$ and for the boundary/initial condition on $\Gamma$.
PINNs \cite{raissi2019physics} is a neural network-based PDE solver via minimizing the following boundary and residual loss functions. Concretely, given the boundary points $\{\bx_{b,i}\}_{i=1}^{n_b} \subset \Gamma$ and the residual points $\{\bx_{r,i}\}_{i=1}^{n_r} \subset \Omega$, the PINN loss is composed of the mean square error in the residual and on the boundary:
\begin{equation}
\begin{aligned}
\mathcal{L}(\theta) &= \lambda_b \mathcal{L}_b(\theta) + \lambda_r \mathcal{L}_r(\theta)\\
&=\frac{\lambda_b}{n_b}\sum_{i=1}^{n_b} {|\mathcal{B}u_{\theta}(\bx_{b,i})-B(\bx_{b,i})|}^2 + \frac{\lambda_r}{n_r}\sum_{i=1}^{n_r} {|\mathcal{L}u_{\theta}(\bx_{r,i})-g(\bx_{r,i})|}^2,
\end{aligned}
\end{equation}
where $\lambda_b$ is the weight for the boundary loss while $\lambda_r$ is that for the residual loss.

\subsection{Randomized Smoothing PINNs}
He et al. \cite{he2023learning} proposed the randomly smoothed neural network structure for backpropagation-free PINN computation:
\begin{equation}
u(\bx; \theta) = \mathbb{E}_{\delta \sim \mathcal{N}(0, \sigma^2\boldsymbol{I})}f(\bx+\delta;\theta),
\end{equation}
where $f(\bx;\theta)$ is a vanilla neural network parameterized by $\theta$; $u(\bx;\theta)$ is the corresponding smoothed version of the network $f(\bx;\theta)$ serving as the surrogate model in PINNs.

The derivative of $u(\bx;\theta)$ can be analytically computed without backpropagation. For instance, its gradient, Laplacian, and Hessian with respect to the input $\bx$ can be written as follows (see He et al. \cite{he2023learning}):
\begin{equation}
\nabla_{\bx} u(\bx; \theta) = \mathbb{E}_{\delta \sim \mathcal{N}(0, \sigma^2\boldsymbol{I})}\left[\frac{\delta}{\sigma^2}f(\bx+\delta;\theta)\right].
\end{equation}
\begin{equation}
\Delta_{\bx} u(\bx;\theta) = \mathbb{E}_{\delta \sim \mathcal{N}(0, \sigma^2\boldsymbol{I})}\left[\frac{\Vert\delta\Vert^2-\sigma^2d}{\sigma^4}f(\bx+\delta;\theta)\right].
\end{equation}
\begin{equation}
\text{Hess}_{\bx}u(\bx;\theta) = \mathbb{E}_{\delta \sim \mathcal{N}(0, \sigma^2\boldsymbol{I})}\left[\frac{\delta\delta^{\mathrm{T}}-\sigma^2d}{\sigma^4}f(\bx+\delta;\theta)\right].
\end{equation}
Here $\nabla_{\bx}, \Delta_{\bx}, \text{Hessian}_{\bx}$ are the gradient, Laplacian, and Hessian operators with respect to the input $\bx$, respectively.
All of them can be simulated by Monte Carlo sampling for the expectation estimator to calculate the derivatives without the expensive automatic differentiation, and then the PINN loss is used to solve the PDE.

He et al. \cite{he2023learning} also introduced a corresponding variance reduction form, as the entire derivative estimation involves Monte Carlo estimation of expectations, which contains certain variance. Specifically, the variance reduction is related to control variate and antithetic variable
method, whose ultimate forms are similar to the numerical differentiation:
\begin{equation}
\nabla_{\bx} u(\bx; \theta) = \mathbb{E}_{\delta \sim \mathcal{N}(0, \sigma^2\boldsymbol{I})}\left[\frac{\delta}{2\sigma^2}\left(f(\bx+\delta;\theta) - f(\bx-\delta;\theta)\right)\right].
\end{equation}
\begin{equation}
\Delta_{\bx} u(\bx;\theta) = \mathbb{E}_{\delta \sim \mathcal{N}(0, \sigma^2\boldsymbol{I})}\left[\frac{\Vert\delta\Vert^2-\sigma^2d}{2\sigma^4}\left(f(\bx+\delta;\theta)+f(\bx-\delta;\theta)-2f(\bx;\theta)\right)\right].
\end{equation}
\begin{equation}
\text{Hess}_{\bx}u(\bx;\theta) = \mathbb{E}_{\delta \sim \mathcal{N}(0, \sigma^2\boldsymbol{I})}\left[\frac{\delta\delta^{\mathrm{T}}-\sigma^2d}{2\sigma^4}\left(f(\bx+\delta;\theta)+f(\bx-\delta;\theta)-2f(\bx;\theta)\right)\right].
\end{equation}

The essence of the randomized smoothing PINN lies in transforming the derivatives and model inference into an expectation. Especially in high-dimensional scenarios, this approach is highly cost-effective since computing the Hessian and other high-order derivatives of a complicated neural network in high dimensions is prohibitively expensive.
The Monte Carlo sampling method for estimating expectations serves as a powerful tool to combat the curse-of-dimensionality, especially when combined with the mesh-free PINN approach. This synergy positions it as a formidable tool for addressing high-dimensional PDEs.

Specifically, given a sample size of $K \in \mathbb{Z}^+$ in Monte Carlo, we can approximate the expectations above as follows.
\begin{equation}\label{eq:RS_MC_1}
\widehat{u}(\bx;\theta;\delta) := \frac{1}{K}\sum_{i=1}^Kf(\bx+\delta_i;\theta) \approx \mathbb{E}_{\delta \sim \mathcal{N}(0, \sigma^2\boldsymbol{I})}f(\bx+\delta;\theta) = u(\bx; \theta),
\end{equation}
where $\delta = \{\delta_i\}_{i=1}^K$ are $K$ $i.i.d.$ Gaussian samples and $\widehat{u}(\bx;\theta;\delta)$ is the Monte-Carlo-based estimation of the exact $u(\bx;\delta)$ on the point $\bx$. Similarly, for the gradient, Laplacian, and Hessian, we have the following Monte-Carlo-based estimators, which are then substituted into the PINN loss for optimization:
\begin{equation}\label{eq:RS_MC_2}
\widehat{\nabla_{\bx} u}(\bx; \theta;\delta) :=  \frac{1}{K}\sum_{i=1}^K\left[\frac{\delta}{2\sigma^2}\left(f(\bx+\delta_i;\theta) - f(\bx-\delta_i;\theta)\right)\right].
\end{equation}
\begin{equation}\label{eq:RS_MC_3}
\widehat{\Delta_{\bx} u}(\bx;\theta;\delta) = \frac{1}{K} \sum_{i=1}^K \left[\frac{\Vert\delta_i\Vert^2-\sigma^2d}{2\sigma^4}\left(f(\bx+\delta_i;\theta)+f(\bx-\delta_i;\theta)-2f(\bx;\theta)\right)\right].
\end{equation}
\begin{equation}\label{eq:RS_MC_4}
\widehat{\text{Hess}_{\bx}u}(\bx;\theta;\delta) = \frac{1}{K} \sum_{i=1}^K \left[\frac{\delta_i\delta_i^{\mathrm{T}}-\sigma^2d}{2\sigma^4}\left(f(\bx+\delta_i;\theta)+f(\bx-\delta_i;\theta)-2f(\bx;\theta)\right)\right].
\end{equation}

\section{Proposed Method}
In this section, we show that the original formulation of randomized smoothing PINN leads to a biased gradient. We further show that bias comes from two contributions: the nonlinear mean square error loss function in the PINN loss and the nonlinearity of PDE itself. These nonlinearities disrupt the linearity of the mathematical expectation in the RS-PINN for model inference and the model's derivatives in the PINN loss, thereby introducing bias.
Then, we correct these biases for better performance, demonstrating how the two biases affect PINN's performance differently and showing that the biased (unbiased) version has a lower (higher) gradient variance and that the biased version runs faster than the unbiased one per epoch.
Hence, we finally combine them to propose a hybrid version, which runs as fast as the biased version and as accurate as the unbiased one.

\subsection{Bias from the Mean Square Error Loss Function}
In this subsection, we illustrate the bias of RS-PINN's loss function and its gradient with respect to model parameters induced by the nonlinear mean square error loss function in the PINN loss, since its nonlinearity violates the linearity of expectation that guarantees unbiasedness.

Without loss of generality, since the inference and the derivatives of the surrogate model $u(\bx;\theta)$ can all be written as an expectations, let us use the boundary loss to demonstrate the first bias from the nonlinearity of the mean square error loss function, which includes the following expectation related to model inference:
\begin{equation}
u(\bx; \theta) = \mathbb{E}_{\delta \sim \mathcal{N}(0, \sigma^2\boldsymbol{I})}f(\bx+\delta;\theta).
\end{equation}
The boundary loss on a boundary point $\bx$ is:
\begin{equation}
L_b(\theta) = \left(u(\bx; \theta) - g(\bx)\right)^2= \left(\mathbb{E}_{\delta \sim \mathcal{N}(0, \sigma^2\boldsymbol{I})}f(\bx+\delta;\theta) - g(\bx)\right)^2, 
\end{equation}
where $g(\bx)$ is the given boundary condition.
He et al. \cite{he2023learning} approximate the boundary loss by Monte Carlo:
\begin{equation}
{L}^{(0)}_b(\theta) = \left(\widehat{u}(\bx; \theta;\delta) - g(\bx)\right)^2= \left(\frac{1}{K}\sum_{i=1}^Kf(\bx+\delta_i;\theta) - g(\bx)\right)^2.
\end{equation}
Although $\widehat{u}(\bx; \theta;\delta)$ is an unbiased estimator of $u(\bx; \theta)$, due to the nonlinear quadratic form of the mean square error loss function, the expectation of the loss function ${L}_b^{(0)}(\theta)$ is not the true loss $L_b(\theta)$:
\begin{equation}
\begin{aligned}
\mathbb{E}_{\delta}\left[{L}^{(0)}_b(\theta)\right] &= \mathbb{E}_{\delta}\left[\left(\frac{1}{K}\sum_{i=1}^Kf(\bx+\delta_i;\theta) - g(\bx)\right)^2 \right]\neq L_b(\theta)= \left(\mathbb{E}_{\delta \sim \mathcal{N}(0, \sigma^2\boldsymbol{I})}f(\bx+\delta;\theta) - g(\bx)\right)^2,
\end{aligned}
\end{equation}
i.e., the formulation of the loss function by He et al. \cite{he2023learning} is biased since the nonlinear mean square error loss function violates the linearity of mathematical expectation.

To correct the bias, we just need to sample independently two groups of Gaussian variables $\delta_i$ and $\delta_i'$ and compute the following debiased loss function:
\begin{equation}
{L}^{(1)}_b(\theta) = \left[\widehat{u}(\bx;\theta;\delta') - g(\bx)\right]\left[\widehat{u}(\bx;\theta;\delta) - g(\bx)\right]= \left[\frac{1}{K}\sum_{i=1}^Kf(\bx+\delta_i';\theta) - g(\bx)\right] \left[\frac{1}{K}\sum_{i=1}^Kf(\bx+\delta_i;\theta) - g(\bx)\right],  
\end{equation}
where $\delta_i \sim \mathcal{N}(0, \sigma^2\boldsymbol{I})$ and $\delta_i' \sim \mathcal{N}(0, \sigma^2\boldsymbol{I})$ are independent. Then, the derived loss function and its gradient with respect to $\theta$ for optimization are all unbiased, since we eliminate the bias introduced by nonlinearity through re-sampling, which breaks down the nonlinearity. This is summarized in the following theorem.
\begin{theorem}
\label{thm:unbiased1}
The loss ${L}^{(1)}_b(\theta)$ and its gradient with respect to $\theta$ are unbiased estimators for the exact loss $L_b(\theta)$ and its gradient with respect to $\theta$, respectively, i.e.,
\begin{equation}
\mathbb{E}_{\delta,\delta'}\left[{L}^{(1)}_b(\theta)\right] = L_b(\theta), \quad
\mathbb{E}_{\delta,\delta'}\left[\frac{\partial{L}^{(1)}_b(\theta)}{\partial \theta}\right] = \frac{\partial{L}_b(\theta)}{\partial \theta}.
\end{equation}
\end{theorem}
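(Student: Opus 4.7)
The plan is to exploit the independence of the two Gaussian batches $\delta = \{\delta_i\}_{i=1}^K$ and $\delta' = \{\delta_i'\}_{i=1}^K$ to turn the expectation of a product into the product of expectations, and to combine this with the unbiasedness of the Monte Carlo estimator $\widehat{u}(\bx;\theta;\delta)$ for $u(\bx;\theta)$ that was already recorded in equation~\eqref{eq:RS_MC_1}.

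First I would handle the loss. Since $\delta$ and $\delta'$ are independent and identically distributed $\mathcal{N}(0,\sigma^2 I)$ samples, and $g(\bx)$ is deterministic, the factors $\widehat{u}(\bx;\theta;\delta')-g(\bx)$ and $\widehat{u}(\bx;\theta;\delta)-g(\bx)$ are independent random variables. Therefore
\begin{equation*}
\mathbb{E}_{\delta,\delta'}\!\left[L_b^{(1)}(\theta)\right]
= \mathbb{E}_{\delta'}\!\left[\widehat{u}(\bx;\theta;\delta')-g(\bx)\right]\,\mathbb{E}_{\delta}\!\left[\widehat{u}(\bx;\theta;\delta)-g(\bx)\right]
= \bigl(u(\bx;\theta)-g(\bx)\bigr)^2 = L_b(\theta),
\end{equation*}
where the penultimate equality uses the linearity of expectation inside each factor together with $\mathbb{E}_\delta[\widehat{u}(\bx;\theta;\delta)] = u(\bx;\theta)$. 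This is essentially the whole content of the first claim; re-sampling is the mechanism that restores linearity because the MSE nonlinearity is only quadratic, i.e., a product of two factors that can be decoupled.

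For the gradient I would first apply the product rule to $L_b^{(1)}(\theta)$ to get
\begin{equation*}
\frac{\partial L_b^{(1)}(\theta)}{\partial \theta}
= \frac{\partial \widehat{u}(\bx;\theta;\delta')}{\partial \theta}\bigl(\widehat{u}(\bx;\theta;\delta)-g(\bx)\bigr)
+ \bigl(\widehat{u}(\bx;\theta;\delta')-g(\bx)\bigr)\frac{\partial \widehat{u}(\bx;\theta;\delta)}{\partial \theta}.
\end{equation*}
Taking $\mathbb{E}_{\delta,\delta'}$, each summand again factorizes by independence, and within each factor I would swap derivative and expectation (Leibniz rule, justified by the smoothness of $f$ in $\theta$ and standard integrability of Gaussian-weighted quantities, which is the only regularity input needed). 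This yields $\mathbb{E}_\delta[\partial_\theta \widehat{u}(\bx;\theta;\delta)] = \partial_\theta u(\bx;\theta)$ on one side and $u(\bx;\theta)-g(\bx)$ on the other, giving $2(\partial_\theta u(\bx;\theta))(u(\bx;\theta)-g(\bx)) = \partial_\theta L_b(\theta)$.

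The only non-cosmetic obstacle I anticipate is the derivative–expectation swap inside each $\widehat{u}$ factor; everything else is just the independence-plus-unbiasedness argument. In the PINN setting $f(\bx+\delta;\theta)$ is typically a smooth network with polynomially bounded gradients against a Gaussian weight, so a dominated-convergence justification suffices and I would simply state this as a standard regularity assumption rather than belaboring the estimate.
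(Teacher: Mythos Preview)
Your proposal is correct and matches the paper's own proof: both exploit the independence of $\delta$ and $\delta'$ to factor the expectation of the product, then invoke the unbiasedness of $\widehat{u}$ from~\eqref{eq:RS_MC_1}. The only cosmetic difference is that the paper collapses the two product-rule terms into a single term with a factor of $2$ by symmetry (since $\delta$ and $\delta'$ are i.i.d.), whereas you keep them separate; your explicit mention of the Leibniz swap is a point the paper leaves implicit.
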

\begin{proof}
The proof is presented in Appendix \ref{appendix:A}
\end{proof}
Our previous discussion on boundary loss can be extended to the case of residual loss in linear PDEs. Due to the linearity of the mathematical expectation, the residual part of a linear PDE preserves the unbiased nature of Monte Carlo sampling. Therefore, the sole source of bias stems from the nonlinearity of the mean square error loss function.

Taking everything together, in this subsection, we have introduced a debiasing method for the loss function of linear PDEs in RS-PINN. The essence lies in eliminating the nonlinearity in the mean square error loss function through resampling, thereby leveraging the linearity of mathematical expectation to ensure unbiasedness. However, we note that this does not hold true for nonlinear PDEs whose residual parts violate the linearity of expectation. Below, we elucidate the additional bias introduced by the nonlinearity of PDEs.

\subsection{Bias from the PDE Nonlinearity}
In this subsection, we illustrate the bias of RS-PINN induced by the PDE nonlinearity that violates the linearity of expectation. Since nonlinearity differs in various PDEs, we commence with a brief illustration of the HJB equation. Subsequently, considering the diverse orders of nonlinearity inherent in distinct nonlinear equations, we extend our methodology to various and more general scenarios.

We take the Hamilton-Jacobi-Bellman (HJB) equation in He et al. \cite{he2023learning} as an example, whose nonlinear PDE part is
\begin{equation}
u_t = \Delta_{\bx} u - \Vert\nabla_{\bx} u(\bx)\Vert^2.
\end{equation}
To simplify the discussion and without the loss of generality since the linear PDE case has been tackled in the previous subsection, we ignore the linear term of the HJB equation and only consider the nonlinear term $\Vert\nabla_{\bx} u(\bx)\Vert^2$.
The true residual loss on a residual point $\bx$ is:
\begin{equation}
L_r(\theta) = \left(\Vert\nabla_{\bx}u(\bx; \theta) \Vert^2 - g(\bx)\right)^2 
\end{equation}
He et al. \cite{he2023learning} approximate the boundary loss by Monte Carlo:
\begin{equation}
{L}^{(0)}_r(\theta) = \left(\left\|\widehat{\nabla_{\bx}u}(\bx;\theta;\delta)\right\|^2 - g(\bx)\right)^2,
\end{equation}
where $\widehat{\nabla_{\bx}u}(\bx;\theta;\delta)$ defined in equation (\ref{eq:RS_MC_2}) is an unbiased estimator of $u(\bx; \theta)$ based on Monte-Carlo sampling.
The original formulation of He et al. \cite{he2023learning} is biased due to the nonlinearity of the mean square error loss function. However, our previous approach for linear PDEs is still biased due to the quadratic term $\Vert\cdot\Vert^2$ term in the loss due to the PDE nonlinearity, which violates the linearity of mathematical expectation.
Concretely, the loss correcting the bias from the forward and backward passes is
\begin{equation}
\begin{aligned}
{L}^{(1)}_r(\theta) &= \left(\left\|\widehat{\nabla_{\bx}u}(\bx;\theta;\delta)\right\|^2 - g(\bx)\right) \times \left(\left\|\widehat{\nabla_{\bx}u}(\bx;\theta;\delta')\right\|^2 - g(\bx)\right).
\end{aligned}
\end{equation}
Although $\widehat{u}(\bx; \theta;\delta)$ is an unbiased estimator of $u(\bx; \theta)$, an additional bias comes from the nonlinear term $\Vert \cdot\Vert^2$ due to the PDE nonlinearity $\Vert \nabla u \Vert^2$.

We can correct the bias via sampling the quadratic terms in $\Vert f\Vert ^2 = \langle f,f\rangle$ independently, using four groups of Gaussian samples $\delta, \delta', \delta'', \delta'''$:
\begin{equation}
\begin{aligned}
{L}^{(2)}_r(\theta) &=\left(\left\langle\widehat{\nabla_{\bx}u}(\bx;\theta;\delta), \widehat{\nabla_{\bx}u}(\bx;\theta;\delta')\right\rangle - g(\bx)\right) \times \left(\left\langle\widehat{\nabla_{\bx}u}(\bx;\theta;\delta''), \widehat{\nabla_{\bx}u}(\bx;\theta;\delta''')\right\rangle - g(\bx)\right).
\end{aligned}
\end{equation}
Then, the derived loss function and its gradient with respect to $\theta$ for optimization are all unbiased.
\begin{theorem}
\label{thm:unbiased2}
The loss ${L}^{(2)}_r(\theta)$ and its gradient with respect to $\theta$ are unbiased estimators for the loss $L_r(\theta)$ and its gradient with respect to $\theta$, respectively, i.e.,
\begin{equation}
\mathbb{E}_{\delta,\delta',\delta'',\delta'''}\left[{L}^{(2)}_r(\theta)\right] = L_r(\theta), \quad
\mathbb{E}_{\delta,\delta',\delta'',\delta'''}\left[\frac{\partial}{\partial \theta}{L}^{(2)}_r(\theta)\right] = \frac{\partial}{\partial \theta}L_r(\theta).
\end{equation}
\end{theorem}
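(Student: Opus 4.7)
The plan is to first establish unbiasedness of the loss and then extend to the gradient by interchanging $\partial_\theta$ and expectation. Let $v := \nabla_{\bx} u(\bx; \theta)$, so the exact residual loss is $L_r(\theta) = (\Vert v\Vert^2 - g(\bx))^2$. I will introduce the auxiliary quantity
\[
A(\delta_a, \delta_b) := \bigl\langle \widehat{\nabla_{\bx} u}(\bx; \theta; \delta_a),\, \widehat{\nabla_{\bx} u}(\bx; \theta; \delta_b)\bigr\rangle - g(\bx),
\]
so that $L_r^{(2)}(\theta) = A(\delta,\delta')\cdot A(\delta'',\delta''')$ with the four Gaussian sample groups mutually independent.

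The crux is a lemma stating that for any two independent Gaussian groups $\delta_a, \delta_b$,
\[
\mathbb{E}_{\delta_a,\delta_b}\bigl[\bigl\langle \widehat{\nabla_{\bx}u}(\bx;\theta;\delta_a),\,\widehat{\nabla_{\bx}u}(\bx;\theta;\delta_b)\bigr\rangle\bigr] = \Vert v\Vert^2.
\]
I would prove this by expanding the inner product coordinatewise, applying Fubini to swap the independent expectations, and then using the coordinatewise unbiasedness of $\widehat{\nabla_{\bx} u}$ established in He et al. Because $(\delta,\delta')$ is independent of $(\delta'',\delta''')$, the expectation of the product factorizes, giving
\[
\mathbb{E}[L_r^{(2)}(\theta)] = \mathbb{E}[A(\delta,\delta')]\cdot\mathbb{E}[A(\delta'',\delta''')] = (\Vert v\Vert^2 - g(\bx))^2 = L_r(\theta).
\]

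For the gradient I would apply the product rule to $\partial_\theta L_r^{(2)}$ to get two summands, factor the expectation of each summand by the same independence, and justify swapping $\mathbb{E}$ and $\partial_\theta$ by a dominated-convergence argument (the network $f$ is $C^1$ in $\theta$ and Gaussian densities dominate any polynomial tail in $\delta$ that appears in the derivative estimators). Substituting $\mathbb{E}[A(\cdot,\cdot)] = \Vert v\Vert^2 - g(\bx)$ into both summands and recognizing the result as the product rule applied in reverse to $(\Vert v\Vert^2 - g(\bx))^2$ yields $\partial_\theta L_r(\theta)$.

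The main obstacle is bookkeeping rather than any deep estimate: both the outer MSE nonlinearity $(\cdot)^2$ and the inner PDE nonlinearity $\Vert\cdot\Vert^2$ must be decoupled simultaneously, which is exactly why four independent groups (not two) are required, and when the gradient is taken one must verify that none of the independent groups is reused across the two factors produced by the product rule. The integrability needed to legitimize the interchange of $\partial_\theta$ and $\mathbb{E}$ is a secondary technicality, handled uniformly on any bounded $\theta$-neighborhood by standard Gaussian moment bounds.
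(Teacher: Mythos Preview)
Your proposal is correct and follows essentially the same approach as the paper: factor the expectation over the four independent Gaussian groups, use the unbiasedness of each Monte Carlo gradient estimator to evaluate the inner products, and for the gradient part apply the product rule and factor each summand by independence. If anything, you are more careful than the paper, which collapses the two product-rule summands into a single term with a factor of~$2$ by symmetry and silently interchanges $\partial_\theta$ with $\mathbb{E}$ without justification; your dominated-convergence remark fills that gap.
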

\begin{proof}
The proof is presented in Appendix \ref{appendix:A}.
\end{proof}
So far, we present the bias correction technique for one nonlinear PDE, namely the HJB equation, where we correct the biases from the nonlinear mean square error loss function and the PDE nonlinearity. We will discuss general nonlinear PDEs in the next subsection.

\subsection{General Nonlinear PDEs: Order of Nonlinearity}
This subsection extends our bias correction technique to general nonlinear PDEs based on the concept of nonlinearity order.
Specifically, our idea of bias correction can be easily extended to the general $n$th order of nonlinearity.
In the context of nonlinear PDEs, the ``order of nonlinearity" refers to the highest power of the dependent function $u$ or its derivatives in the nonlinear terms of the equation. Nonlinearity in PDEs arises when the equation involves terms that are not proportional and linear to the dependent function or its derivatives. The order of nonlinearity is determined by the highest power of these nonlinear terms.
Intuitively, for the $n$th order nonlinear case, we just need to sample the $n$ terms independently using $n$ different groups of Gaussian variables to break down the nonlinearity and to make the loss function unbiased.
Below are some examples of the nonlinearity order and their corresponding biased and unbiased versions of RS-PINN, whose detailed explanation is further provided in Appendix \ref{appendix:B}.
\begin{itemize}
\item HJB equation. The previous HJB equation given by
$
u_t = \Delta_{\bx} u - \Vert\nabla_{\bx} u(\bx)\Vert^2,$
which has a nonlinearity order of two due to the $\Vert \nabla_{\bx} u \Vert^2$ term.
\item Allen-Cahn (AC) equation is given by
$u_t = \Delta_{\bx} u + u - u^3$, and
its nonlinearity stems from the term $u^3$, which is a cubic function. Therefore, the nonlinearity order of the AC equation is three.
During the model training, we must independently sample the three $u$ terms in $u^3 = u \cdot u \cdot u$ for unbiased gradients.
\item Viscous Burgers' equation is given by
$
u_t + u\sum_{i=1}^du_{\bx_i} - \nu\Delta_{\bx} u(\bx, t) = 0,
$ and
its nonlinearity stems from the term $u\sum_{i=1}^du_{\bx_i}$. Therefore, the nonlinearity order of the viscous Burgers' PDE is two.
During the model training, we must independently sample the $u$ and $\nabla_{\bx} u$ for unbiased gradients.
\item Sine-Gordon equation. However, our method cannot deal with nonlinear like $\sin (u)$ in the Sine-Gorden PDE
$
u_t = \Delta_{\bx} u + \sin (u).
$
Fortunately, we can increase the sample size $K$ in the Monte Carlo to minimize the bias.
Furthermore, correcting the bias from the nonlinearity mean square error loss does suffice for the Sine-Gordon equation, i.e., we can still correct the bias from the nonlinear mean square error loss to improve over the original formulation in He et al. \cite{he2023learning}, which is actually sufficient to obtain a low error in high dimensions.
\end{itemize}

\subsection{Bias-Variance Trade-off and the Hybrid Method}
In this subsection, we discuss the bias-variance trade-off in RS-PINN and propose a hybrid version to incorporate the advantages of both methods to achieve the best performance. We also provide guidelines for explaining when the biased/unbiased version can outperform the other, facilitating the choice of the algorithm in practical scenarios.

The unbiased version employs multiple sets of independent Gaussian samples to calculate the loss, making it slower with larger gradient variances due to more sampling and more randomness; however, it provides unbiased gradients. Specifically, while the biased version from He et al. \cite{he2023learning} requires only one set of Gaussian variables, correcting the bias from the nonlinear MSE loss functions doubles the number of independent Gaussian variable sets while correcting the additional bias from the PDE nonlinearity further increases the number of samples depending on the nonlinearity order of the PDE. For instance, a totally unbiased version of the HJB equation and the viscous Burgers' equation requires four sets, while that of the Allen-Cahn equation requires six sets.

In contrast, the biased version requires only one set of samples, resulting in faster running speed per iteration and smaller gradient variances, but the gradients are biased.
Hence, we propose a hybrid approach that combines the strengths of both methods.
In the initial optimization stages, we use the biased version to converge the model rapidly to a reasonably good point. Once the loss of the biased version ceases to decrease, we transition to the unbiased version for fine-tuning.

This theoretical analysis sheds light on the practical choice of algorithms in computational experiments. In higher dimensions, in the unbiased version by sampling more Gaussians will lead to a much larger variance.
So, it is expected that the unbiased version will have lower variance in lower dimensions and thus have better performance than the biased one.
On the other hand, the biased version will perform better in extremely high dimensions. After the convergence of the biased algorithm, we can further fine-tune it using the unbiased algorithm.

In summary, our guidelines for empirical evaluations based on the theoretical analysis are given as follows. 
In lower dimensions, where the unbiased version exhibits lower variance, its unbiased nature is crucial, allowing for a direct application of the unbiased version. However, in higher dimensions, utilizing the unbiased version directly introduces significant variance, impeding convergence. Therefore, we employ the biased version initially to converge to a reasonably good position and subsequently fine-tune with the unbiased version.

\subsection{Implementation Improvement}
Here, we conduct an analysis of He et al.'s \cite{he2023learning} approach to implementing randomized smoothing in order to identify its limitations. Subsequently, we propose two more accurate and lower-variance implementations.

Suppose that we would like to implement the second-order derivatives and the network includes both $t$ and $\bx$ for time-dependent PDEs
\begin{equation}
u(\bx, t) = \mathbb{E}_{\delta_{\bx} \sim \mathcal{N}(0, \sigma_{\bx}^2I)} \mathbb{E}_{\delta_t \sim \mathcal{N}(0, \sigma_{t}^2I)}\left[f(\bx + \delta_{\bx}, t+\delta_t)\right],
\end{equation}
where we randomly smooth $\bx$ and $t$ using Gaussian with different variance for model flexibility.
He et al. \cite{he2023learning} implement the randomized smoothing model's derivatives as
\begin{equation}
\boldsymbol{H}_{\bx}u(\bx, t) = \mathbb{E}_{\delta_{\bx} \sim \mathcal{N}(0,\sigma_{\bx}^2I)}\mathbb{E}_{\delta_t \sim \mathcal{N}(0, \sigma_{t}^2I)}\left[\frac{\delta_{\bx}\delta_{\bx}^T - \sigma_{\bx}^2I}{2\sigma_{\bx}^4}(f(\bx+\delta_{\bx}, t+\delta_t) + f(\bx-\delta_{\bx}, t-\delta_t) - 2f(\bx, t))\right].
\end{equation}
However, here we are taking the derivative with respect to $\bx$, with no relation to $t$. Nevertheless, the focus has also shifted to $t$, thereby increasing the variance and impeding convergence.

The correct approach should treat $\bx$ and $t$ as independent variables:
\begin{equation}
\begin{aligned}
\boldsymbol{H}_{\bx}u(\bx, t) &=\mathbb{E}_{\delta_{\bx} \sim \mathcal{N}(0,\sigma_{\bx}^2I)}\left[\frac{\delta_{\bx}\delta_{\bx}^T - \sigma_{\bx}^2I}{2\sigma_{\bx}^4}\mathbb{E}_{\delta_t \sim \mathcal{N}(0, \sigma_{t}^2I)}\left[f(\bx+\delta_{\bx}, t+\delta_t) + f(\bx-\delta_{\bx}, t+\delta_t) - 2f(\bx, t+\delta_t)\right]\right]\\
&= \mathbb{E}_{\delta_{\bx} \sim \mathcal{N}(0,\sigma_{\bx}^2I)}\mathbb{E}_{\delta_t \sim \mathcal{N}(0, \sigma_{t}^2I)}\left[\frac{\delta_{\bx}\delta_{\bx}^T - \sigma_{\bx}^2I}{2\sigma_{\bx}^4}(f(\bx+\delta_{\bx}, t+\delta_t) + f(\bx-\delta_{\bx}, t+\delta_t) - 2f(\bx, t+\delta_t))\right].
\end{aligned}
\end{equation}
Another valid implementation approach is to treat $\bx$ and $t$ as a unified entity and apply the same Gaussian noise smoothing. Then, based on the index, select the model's derivatives concerning both $\bx$ and $t$. However, this method compromises the model's flexibility since the PDE exhibits an asymmetry between $\bx$ and $t$. Therefore, a more reasonable approach is to model them separately.

\section{Computational Experiments}
In our computational experiments, for linear equations (Fokker-Planck PDEs in Subsection \ref{subsec:fp}), we will use ``biased'' to denote the biased version and ``unbiased'' to denote the unbiased version by correcting the bias from the MSE loss. For nonlinear PDEs in the rest of the subsections, we will use ``biased'' as before, and ``unbiased1'' to denote the unbiased version by correcting the bias from the MSE loss solely, and ``unbiased2'' to denote the unbiased version by correcting the two biases from the MSE loss and the PDE nonlinearity. The detailed mathematical formulas for the losses are presented in Appendix \ref{appendix:B}.

\subsection{Isotropic and Anisotropic Linear Fokker-Planck PDEs}\label{subsec:fp}
The isotropic linear Fokker-Planck (heat) PDE is
\begin{equation}
\begin{aligned}
&u_t = \frac{1}{2}\Delta_{\bx} u - \sum_{i=1}^d u_{\bx_i}. \quad \bx \in \mathbb{R}^d, t \in [0, 1].\\
&u(\bx, t=0) = \Vert\bx\Vert^2. \quad \bx \in \mathbb{R}^d,
\end{aligned}
\end{equation}
associated with the initial condition at $t = 0$.
Its exact solution is
\begin{equation}
u(\bx, t) = \Vert \bx -  t\Vert^2 + dt.
\end{equation}
Since this PDE corresponds to a Brownian motion with shift, We sample training residual points and test points based on the SDE trajectory:
\begin{equation}
t \sim \text{Unif}(0, 1), \bx \sim \mathcal{N}(t, 2 - t).
\end{equation}
The anisotropic linear Fokker-Planck (heat) PDE is
\begin{equation}
\begin{aligned}
&u_t = \frac{1}{2}\Delta_{\bx} u - \sum_{i=1}^d \boldsymbol{\mu}_i u_{\bx_i}. \quad \bx \in \mathbb{R}^d, t\in[0,1].\\
&u(\bx,t=0) = \Vert\bx\Vert^2. \quad \bx \in \mathbb{R}^d,
\end{aligned}
\end{equation}
associated with an initial condition at $t=0$. Its solution is
\begin{equation}
u(\bx, t) = \Vert \bx -  \boldsymbol{\mu}t\Vert^2 + dt,
\end{equation}
where $\boldsymbol{\mu}_i \sim \mathcal{N}(1, 1)$ for all dimensions $i$ and $\boldsymbol{\mu} \in \mathbb{R}^d$.
This example is designed to show that RS-PINN can deal with anisotropic problems. Since this PDE corresponds to a Brownian motion with shift, We sample training residual points and test points based on the SDE trajectory:
\begin{equation}
t \sim \text{Unif}(0, 1), \bx \sim \mathcal{N}(\boldsymbol{\mu}t, (2 - t)\cdot\boldsymbol{I}_{d\times d}).
\end{equation}
For both isotropic and anisotropic FP PDEs, we randomly sample 100 residual points at each iteration and 20K fixed test points based on the SDE trajectory. The sample size in the RS-PINN is $K = 1024$, and the variance of Gaussian noise is $\sigma=1e-2$, with a backbone network with 4 layers and 128 hidden units, which is trained by an Adam optimizer \cite{kingma2014adam} with 1e-3 (10, 100, 1K dimension) or 1e-4 (10K dimension) initial learning rate which decays exponentially with coefficient 0.9995 for 10K epochs. We use the boundary augmentation given by the following model output to satisfy the initial condition automatically \cite{lu2021physics}:
\begin{align}
u^{\text{RS}}_\theta(\bx) = u_\theta(\bx, t) t + \Vert\bx\Vert^2,
\end{align}
where $u_\theta(\bx)$ is the randomized smoothing neural network and $u^{\text{RS}}_\theta(\bx)$ is the boundary-augmented model. We repeat our experiment 5 times with 5 independent random seeds. We test RS-PINN with biased, unbiased, and hybrid versions for the 10, 100, 1K, and 10K-dimensional cases. 
For the hybrid version in the isotropic problem, the transition from the biased version to the unbiased one happens in the 1500th, 1500th, 6000th, and 6000th epochs for the 10, 100, 1K, and 10K dimensional PDEs, respectively.
For the hybrid version in the anisotropic problem, the transition from the biased version to the unbiased one happens in the 1000th, 500th, 1000th, 2000th, 6000th, and 8000th epochs for the 10, 100, 250, 500, 1K, and 10K dimensional PDEs, respectively. The transition is chosen by the time when the loss of the biased algorithm ceases to decrease further.

\begin{table}[htbp]\centering
\begin{tabular}{|c|c|c|c|c|}
\hline
Isotropic FP & $10^1$D & $10^2$D & $10^3$D & $10^4$D \\ \hline
Biased & 3.846E-3 & 2.367E-2 & 1.057E-2 & {8.597E-3} \\ \hline
Unbiased & {2.244E-3} & {6.576E-3} & {1.047E-2} & {1.197E-2} \\ \hline
Hybrid & \textbf{2.238E-3} & \textbf{6.561E-3} & \textbf{1.046E-2} & \textbf{7.507E-3} \\ \hline
\end{tabular}
\caption{Results for the isotropic FP PDE.}
\label{tab:isotropic_FP}
\end{table}

The numerical results for the isotropic FP PDE are shown in Table \ref{tab:isotropic_FP}, and Figure \ref{fig:Isotropic_FP} shows convergence curves with respect to the epoch (first row) and the running time (second row). Here is the summary of the results:
\begin{itemize}
\item In lower dimension (10D, $10^2$D), the unbiased version is much better than the biased version in He et al. \cite{he2023learning} since the variance of sampling is lower in lower dimensions where the dimensionality of the samples is low, given that the main bottleneck of the unbiased version is the relatively larger variance compared to the biased version of RS-PINN.
\item However, as the dimensions goes higher ($10^3$D, $10^4$D), the biased version gets better, i.e., the unbiased version encounters huge variance in higher dimensions, whose disadvantages outweigh its benefit of unbiasedness.
\item In $10^1, 10^2, 10^3$D, the hybrid version is as good as the unbiased version.
\item In $10^4$D, the hybrid version converges well by applying the biased version first to get a relatively good convergence point, then the unbiased version is used for finetuning and gets an even better result.
\item In $10^4$D, directly applying the unbiased version will lead to huge variances preventing convergence.
\end{itemize}

\begin{figure}[htbp]
\centering
\includegraphics[scale=0.24]{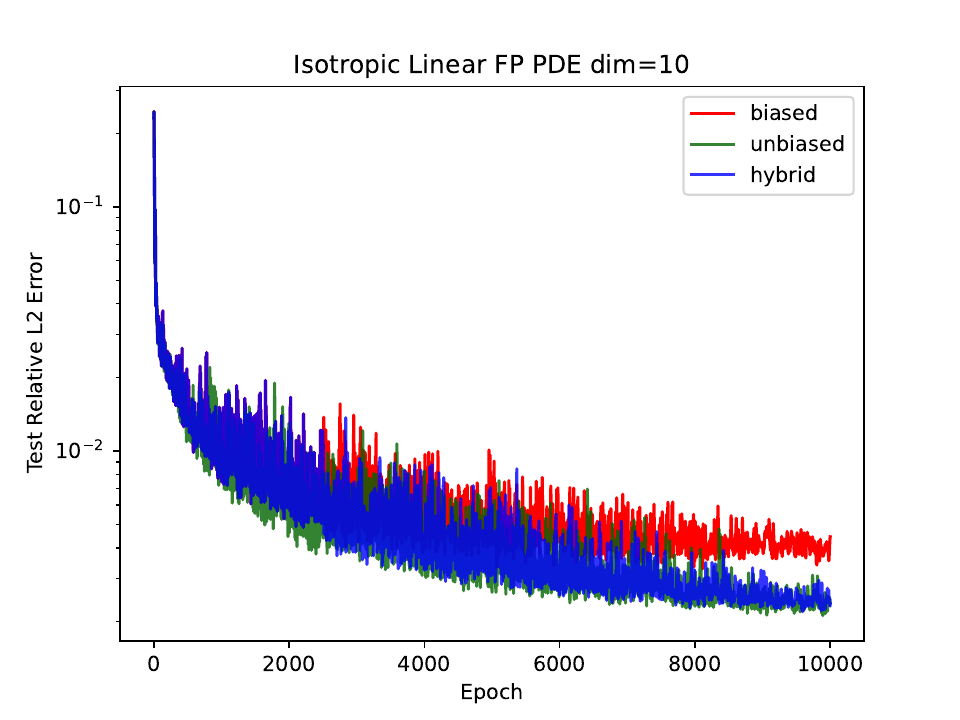}
\includegraphics[scale=0.24]{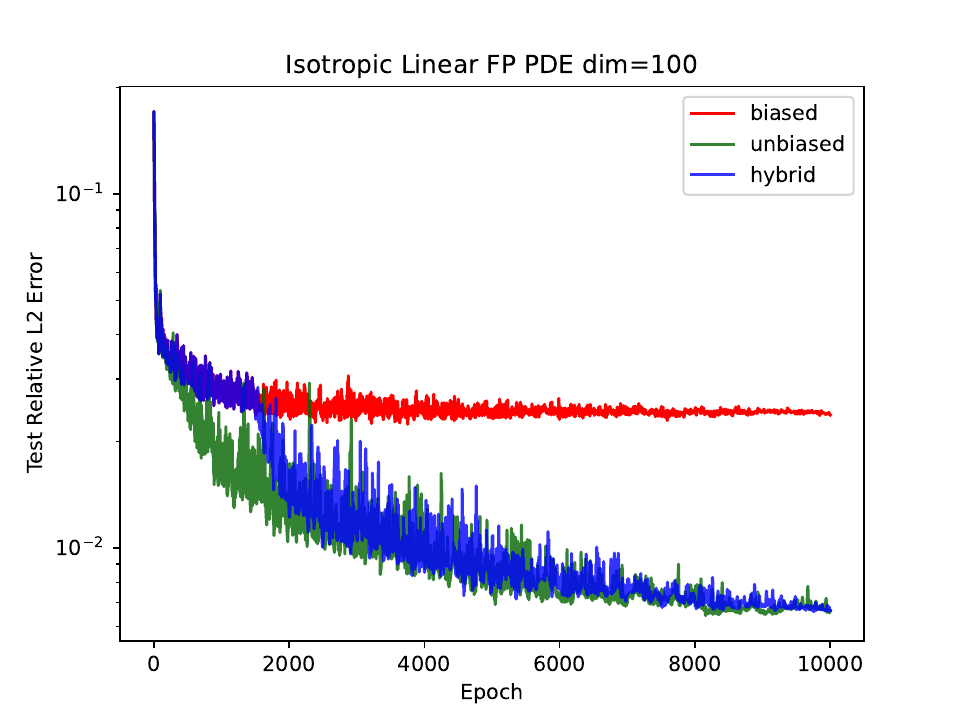}
\includegraphics[scale=0.24]{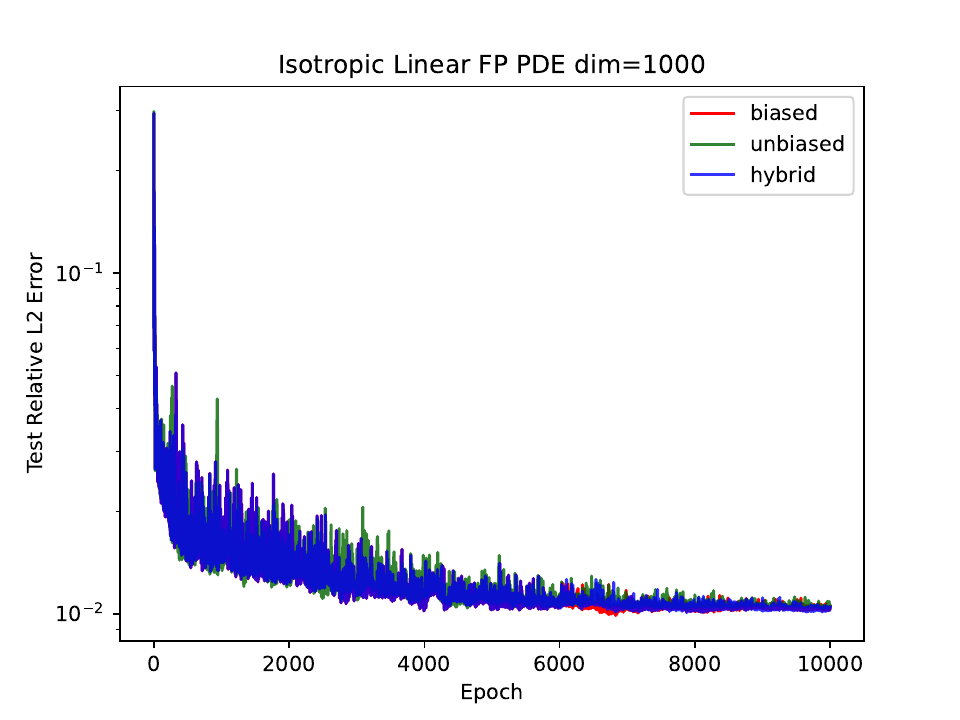}
\includegraphics[scale=0.24]{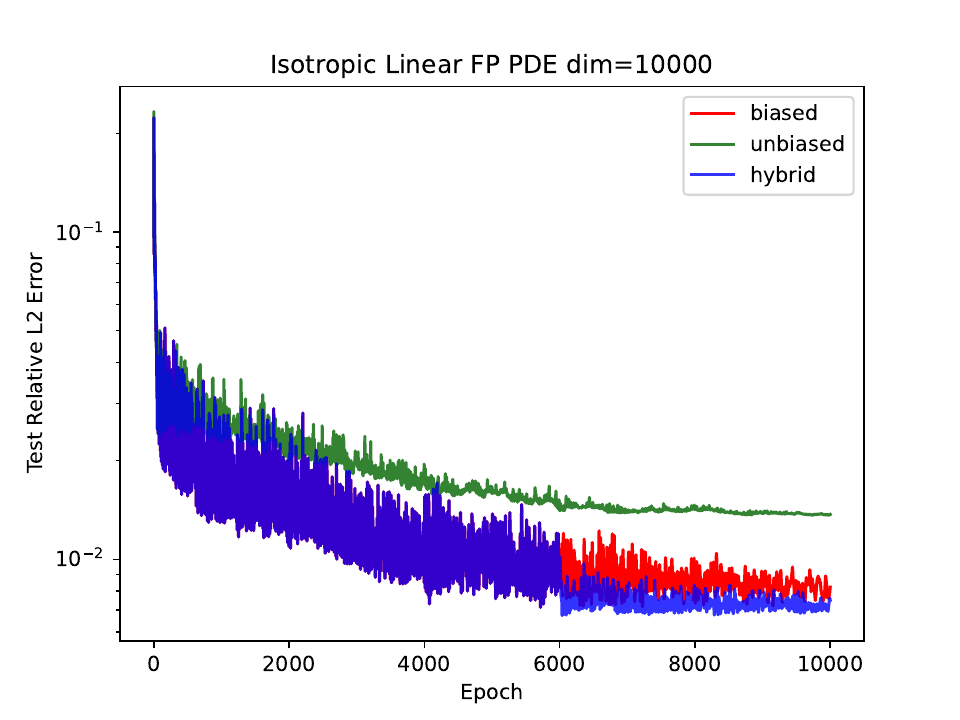}
\includegraphics[scale=0.24]{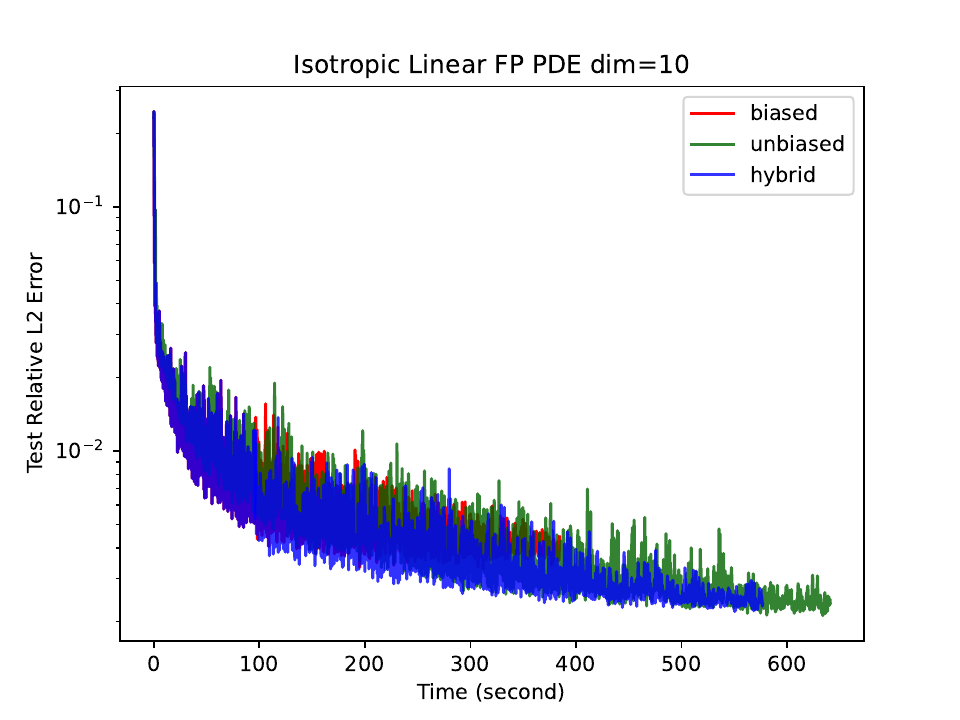}
\includegraphics[scale=0.24]{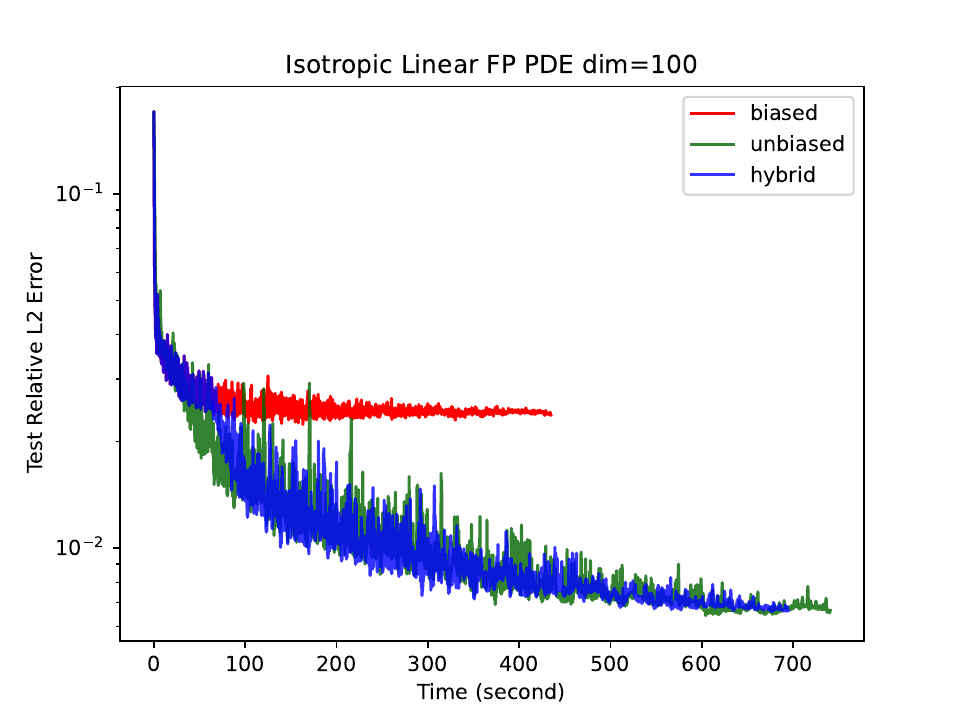}
\includegraphics[scale=0.24]{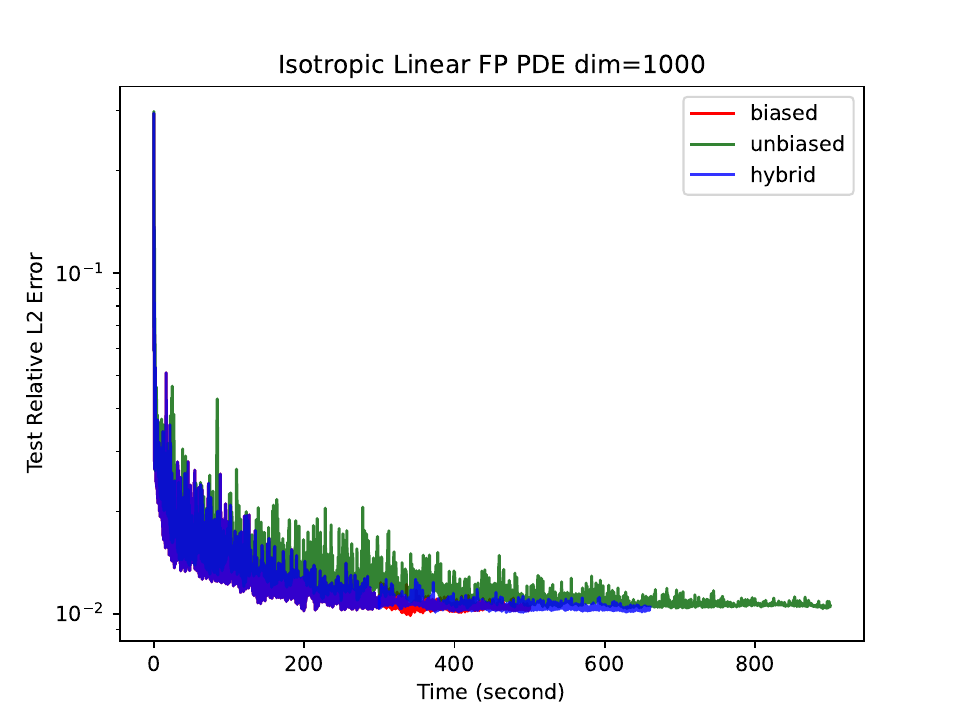}
\includegraphics[scale=0.24]{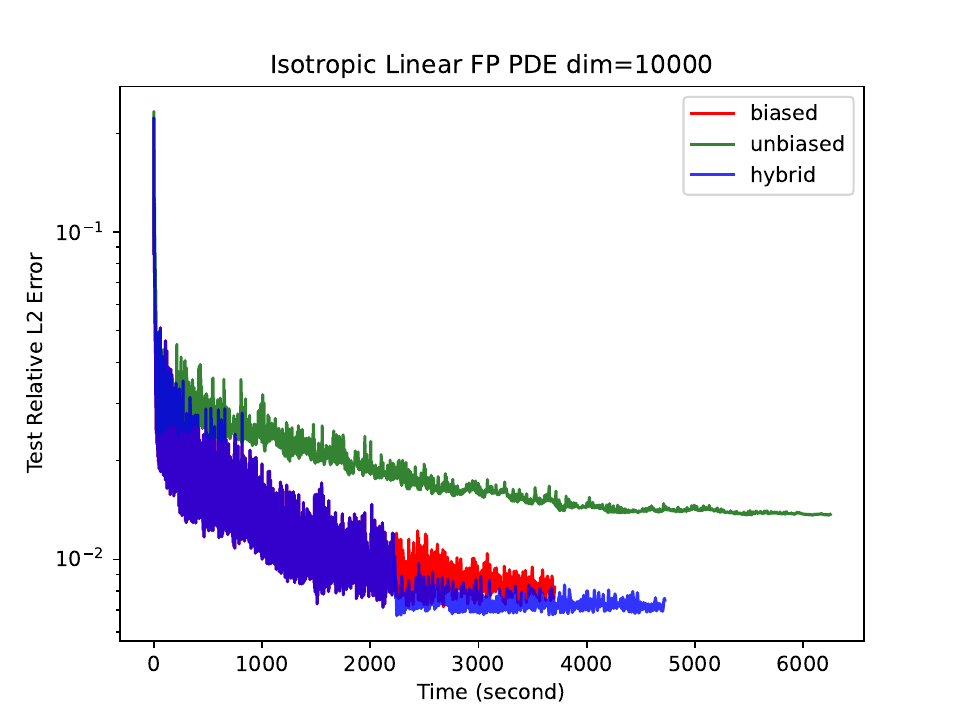}
\caption{Isotropic FP PDE: $10^1$, $10^2$, $10^3$, and $10^4$D convergence curves with respect to the epoch (first row) and the running time (second row). 
In $10^1$, $10^2$, and $10^3$D, the unbiased version is better than the biased version since the sampling variance is lower in lower dimensions where the dimensionality of the samples is low. 
Thus, the hybrid method is as good as the unbiased version thanks to the unbiased training at the second training stage and is faster than the unbiased version thanks to the biased pretraining at the early training phase.
In $10^4$D, the hybrid version converges well by applying the biased version first; then, the unbiased version is used for finetuning and getting an even more stable final convergence result. Solely applying the unbiased version will lead to huge variances preventing convergence.}
\label{fig:Isotropic_FP}
\end{figure}

The results for the anisotropic FP PDE are shown in Table \ref{tab:anisotropic_FP} while the convergence curves and loss records for the highest $10^4$D are shown in Figure \ref{fig:Anisotropic_FP_10001}. We can still observe the advantages of the unbiased version in relatively lower dimensions, and as the dimensionality increases, the biased version gradually outperforms the unbiased version. This is primarily due to the increase in variance for the unbiased version, particularly in extremely high dimensions. Notably, the results obtained by the RS-PINN remain quite stable across different dimensions, demonstrating its ability to address the dimensionality curse. Furthermore, RS-PINN exhibits competence in handling anisotropic problems. The convergence curves in the 10,000-dimensional space suggest that, after convergence is achieved with the biased version, fine-tuning with the unbiased version can lead to even better and more stable results, ultimately causing the hybrid method to outperform the rest.

\begin{table}[htbp]\centering
\begin{tabular}{|c|c|c|c|c|c|c|}
\hline
Anisotropic Linear Heat & 10D & 100D & 250D & 500D & 1,000D & 10,000D \\ \hline
Biased & 5.611E-02&	4.452E-02&	2.827E-02
& 1.935E-02	&{1.251E-02}&	{1.389E-02}
  \\ \hline
Unbiased & {1.043E-02}&	{1.215E-02}&	{1.986E-02}
& {1.846E-02}	&1.657E-02	&4.036E-02
 \\ \hline
Hybrid & \textbf{1.039E-02}&	\textbf{1.211E-02}&	\textbf{1.979E-02}& \textbf{1.840E-02}	&\textbf{1.245E-02}	&\textbf{1.342E-02}
 \\ \hline
\end{tabular}
\caption{Results for the anisotropic FP PDE.}
\label{tab:anisotropic_FP}
\end{table}

\begin{figure}[htbp]
\centering
\includegraphics[scale=0.4]{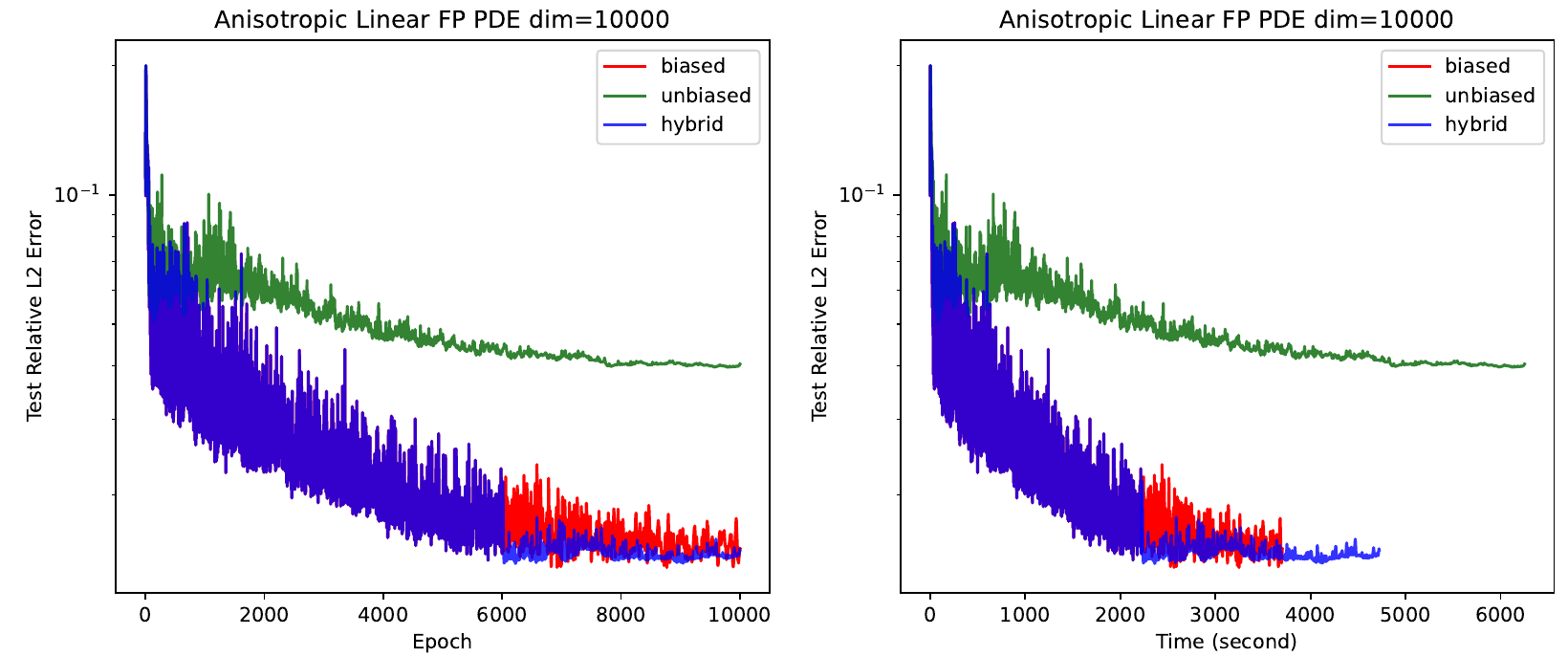}
\caption{Anisotropic FP PDE: $10^4$D convergence curves with respect to the epoch (left) and time (right). The hybrid version converges well by applying
the biased version first; then, the unbiased version is used for finetuning and getting an even more stable final convergence result. Solely applying the unbiased version will lead to huge variances preventing convergence.}
\label{fig:Anisotropic_FP_10001}
\end{figure}

\subsection{Hamilton-Jacobi-Bellman PDEs}
This section delves into the Hamilton-Jacobi-Bellman (HJB) equation, which is widely used in optimal control problems. The nonlinearity inherent in the HJB equation introduces two biases in RS-PINN. We will demonstrate the remarkable performance of the model after correcting these two biases. Correcting only one bias or adopting a completely biased approach yields suboptimal results. Additionally, we consider two different solutions to showcase the model's versatility.

Specifically, we consider the HJB equation with linear-quadratic-Gaussian (LQG) control:
\begin{equation}
\begin{aligned}
&\partial_t u(\bx, t) + \Delta_{\bx} u(\bx, t) - \Vert \nabla_{\bx} u(\bx,t) \Vert^2 = 0, \quad \bx \in \mathbb{R}^d, t\in[0,T]\\
&u(\bx,T)=g(\bx),
\end{aligned}
\end{equation} 
where $g(\bx)$ is the terminal condition to be chosen, the PDE has the solution that can be simulated by Monte Carlo for benchmarking over various initial conditions and dimensions:
\begin{equation}
u(\bx,t) = -\log\left(\int_{\mathbb{R}^d}(2\pi)^{-d/2}\exp(-\Vert \boldsymbol{y} \Vert^2/2)\exp(- g(\bx - \sqrt{2(1-t)}\boldsymbol{y}))d\boldsymbol{y}\right).
\end{equation}
We choose the following cost functions as the terminal conditions:
\begin{itemize}
\item Quadratic cost: 
\begin{equation}
g(\bx)  = \Vert\bx\Vert^2.  \quad u(\bx, t) = \frac{\Vert\bx\Vert^2}{1 + 4(T-t)} + \frac{d}{2} \log(1 + 4(T-t)).  
\end{equation}
Here, the solution can be obtained analytically.
\item Anisotropic Rosenbrock function:
\begin{equation}
g(\bx)  = \sum_{i=1}^{d/2} \left[c_{1,i}(x_{2i-1} - x_{2i})^2 + c_{2,i}x_{2i}^2\right],
\end{equation}
where $c_{1,i}, c_{2,i} \sim \text{Unif}[0, 1]$ and Monte Carlo is required for simulating the exact solution. We use $10^5$ samples for Monte Carlo.
\end{itemize}
Here is the implementation detail. For all three HJB equations, we randomly sample 1K residual points at each iteration and 20K fixed test points based on the distributions $t \sim \text{Unif}[0, 1], \bx \sim \mathcal{N}(0, \boldsymbol{I}_{d\times d})$. The sample sizes in the RS-PINN is $K = 1024$ for training and $K = 128$ for testing, and the variance of Gaussian noise is $\sigma=1e-2$, with a backbone network with 4 layers and 128 hidden units, which is trained by an Adam optimizer \cite{kingma2014adam} with 1e-3 initial learning rate which decays exponentially with coefficient 0.9995 for 10K epochs. We use the boundary augmentation given by the following model output to satisfy the terminal condition automatically \cite{lu2021physics}:
\begin{align}
u^{\text{RS}}_\theta(\bx) = u_\theta(\bx, t) t + g(\bx), 
\end{align}
where $u_\theta(\bx)$ is the randomized smoothing neural network and $u^{\text{RS}}_\theta(\bx)$ is the boundary-augmented model. We repeat our experiment 5 times with 5 independent random seeds.

The computational results for the three HJB equations are shown in Table \ref{tab:HJB}, and the convergence curves for HJB with quadratic cost are shown in Figure \ref{fig:HJB}.

\begin{table}[htbp]\centering
\begin{tabular}{|c|c|c|c|c|}
\hline
HJB (Quadratic Cost) & $10$D & $20$D & 30D & $40$D \\ \hline
Biased & 7.423E-2 & 1.882E-1 & 2.486E-1 & 3.628E-1 \\ \hline
Unbiased1 & 3.896E-2 & 1.281E-1 & 2.332E-1 & 3.204E-1 \\ \hline
Unbiased2 & \textbf{1.415E-2} & \textbf{2.572E-2} & \textbf{2.644E-2} & \textbf{4.223E-2} \\ \hline
\hline
HJB (Anisotropic Cost) & $10$D & $20$D & 30D & $40$D \\ \hline
Biased & 9.361E-2 & 1.820E-1 & 3.305E-1 & 3.998E-1 \\ \hline
Unbiased1 & 7.783E-2 & 2.058E-1 & 3.652E-1 & 4.379E-1 \\ \hline
Unbiased2 & \textbf{6.909E-2} & \textbf{6.137E-2} & \textbf{1.112E-1} & \textbf{1.417E-1} \\ \hline
\end{tabular}
\caption{Results for the HJB equation: Unbiased2 that corrects all the biases from the mean square error loss function and the PDE nonlinearity performs the best in all dimensions and in different settings with various cost functions.}
\label{tab:HJB}
\end{table}

In most cases of the HJB equation, the model that corrects both biases (unbiased2) performs the best. Following this, the model that corrects one bias in the MSE loss function (unbiased1) shows the next best performance, while the completely biased model performs the worst. This highlights the correctness of our analysis regarding the two biases introduced by analyzing nonlinear equations and underscores the improvement achieved by bias correction. 
These HJB equations are not particularly high-dimensional, so our theoretical analysis suggests that using the unbiased version in this scenario is better than the biased one. This is because the former's variance won't be substantial in cases that are not highly dimensional. Lastly, this problem is difficult because we do not have a boundary condition but deal with the unbounded domain, and the PDE solution quickly diverges to infinity when $\bx$ tends to infinity, which makes the model lack information at infinity. Efficient PINN-based algorithms for such HJB equation on unbounded domains are still open questions in the literature. Here,  we focus on comparing the performances of the biased and unbiased versions.

\begin{figure}[htbp]
\centering
\includegraphics[scale=0.24]{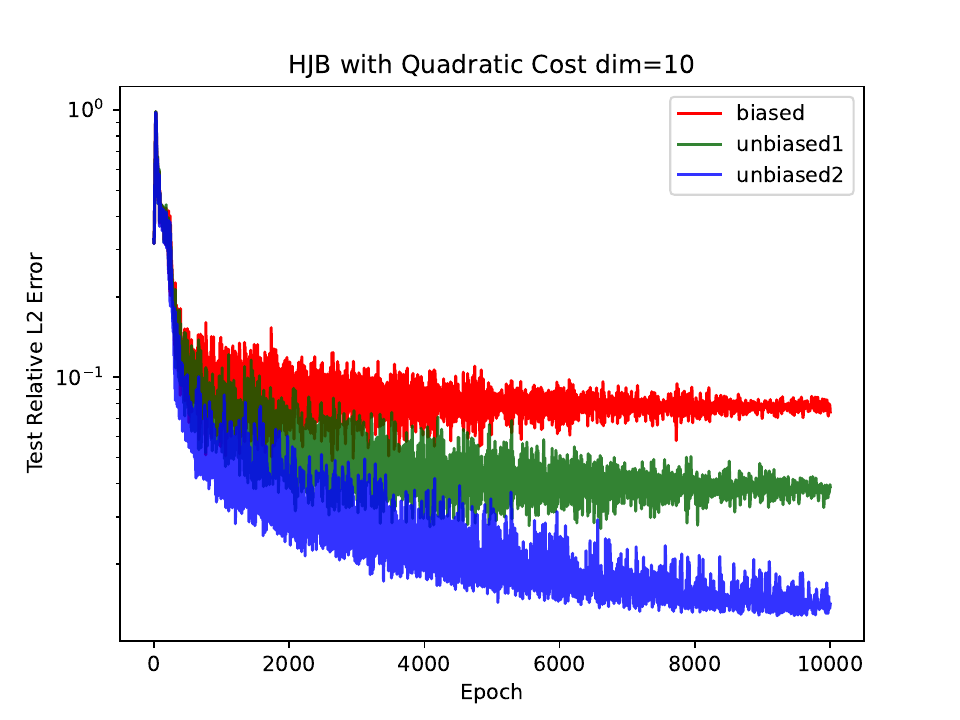}
\includegraphics[scale=0.24]{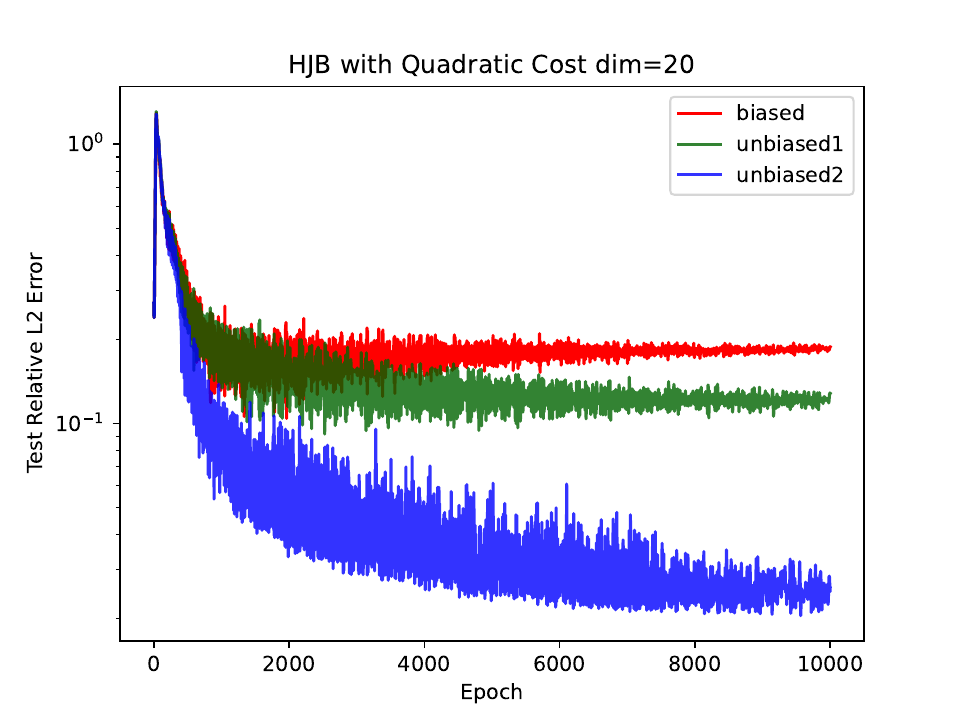}
\includegraphics[scale=0.24]{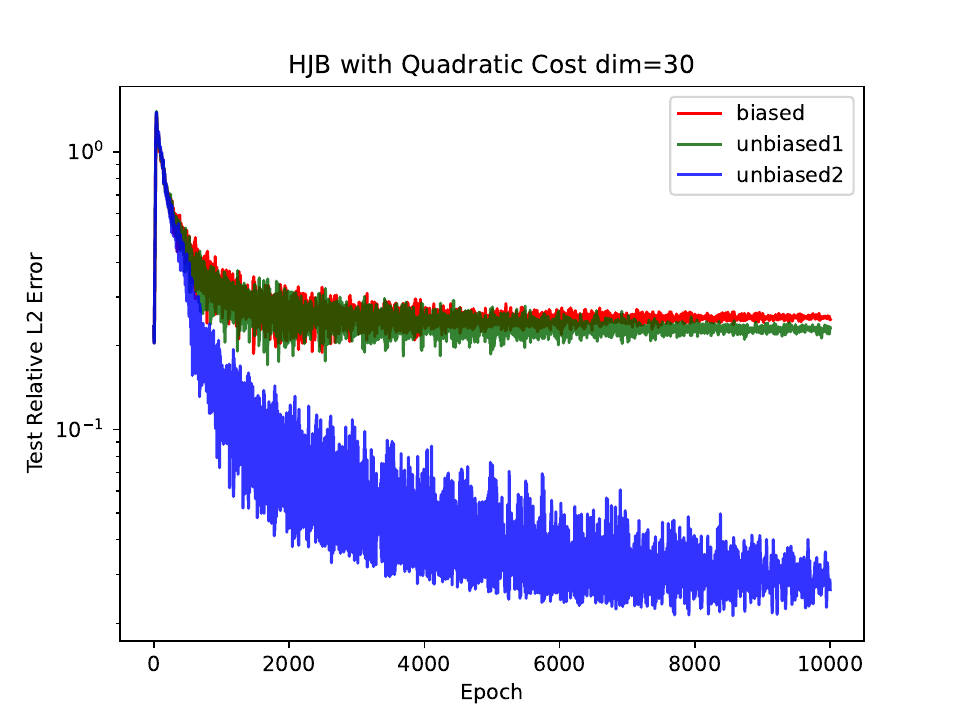}
\includegraphics[scale=0.24]{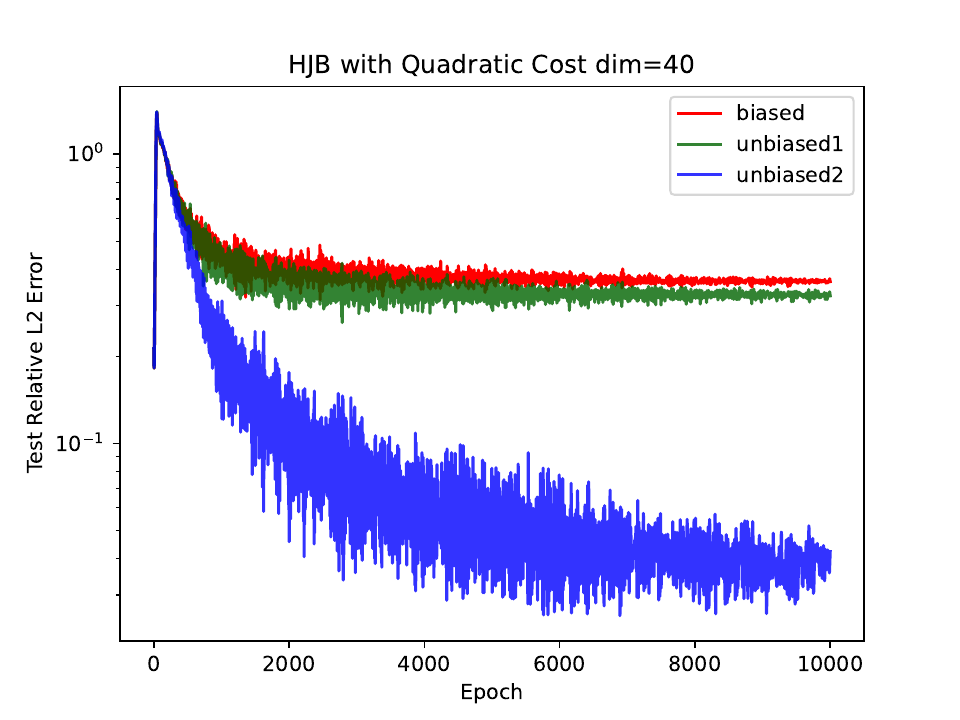}
\includegraphics[scale=0.24]{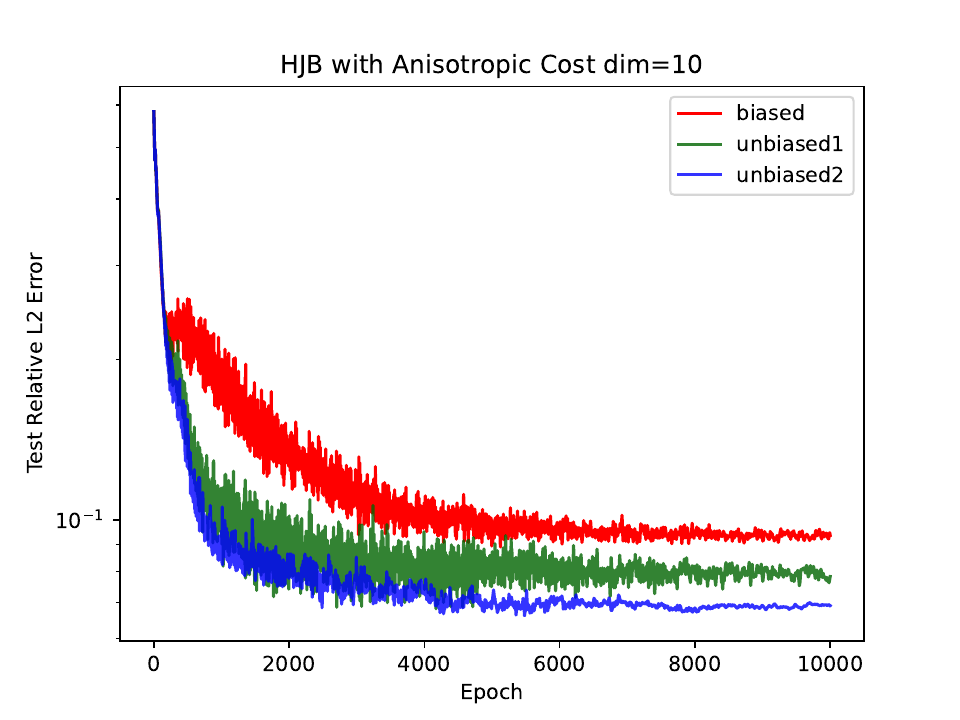}
\includegraphics[scale=0.24]{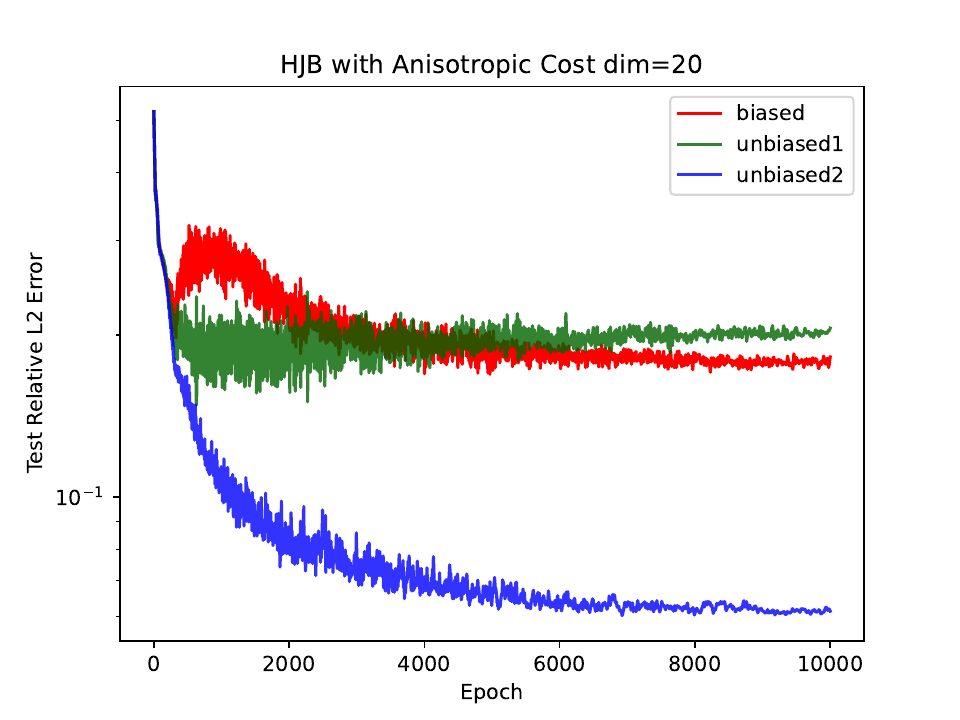}
\includegraphics[scale=0.24]{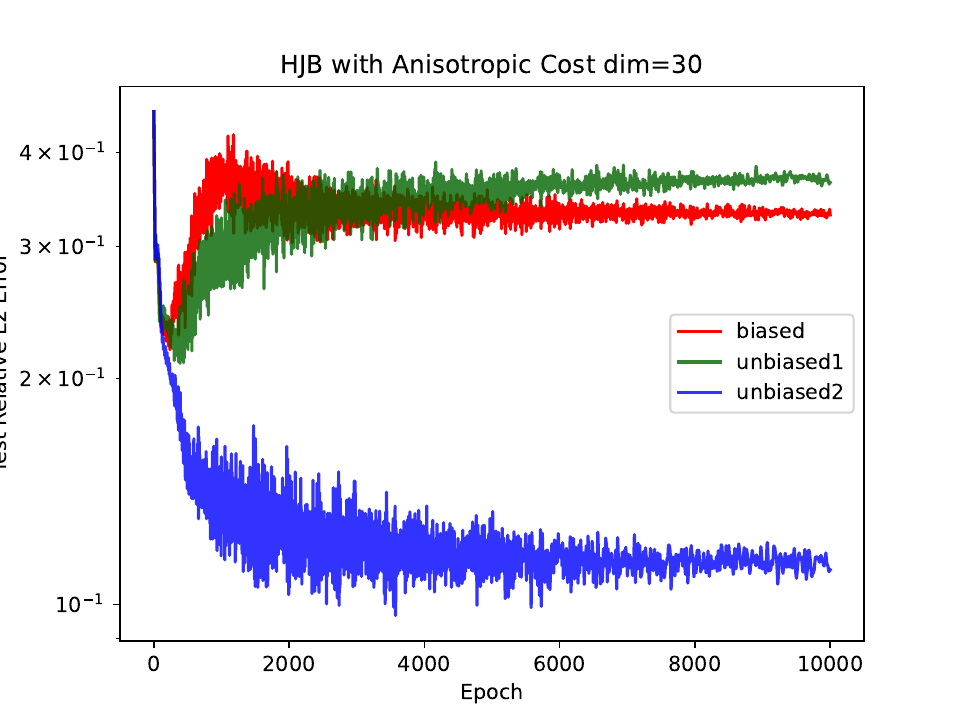}
\includegraphics[scale=0.24]{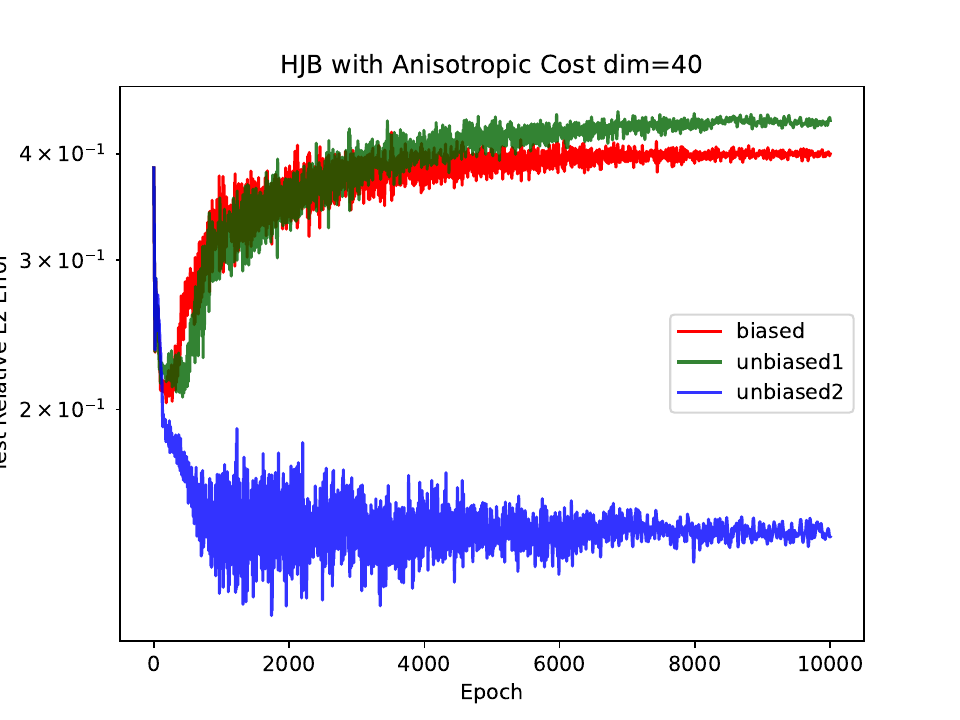}
\caption{Fisrt row: results for the HJB equation with quadratic cost. Second row: results for the HJB equation with anisotropic Rosenbrock cost.
In all dimensions of the HJB equation, the model that corrects both biases (unbiased2) performs the best. Following this, the model that corrects one bias (unbiased1) shows the next best performance, while the completely biased model performs the worst. This highlights the correctness of our analysis regarding the two biases introduced by analyzing nonlinear equations and underscores the improvement achieved by bias correction.}
\label{fig:HJB}
\end{figure}

\subsection{Viscous Burgers' PDE}
In this section, we investigate the nonlinear viscous Burgers' equation. We focus on the influence of different nonlinearities on the biases introduced to RS-PINN.

The $d$-dimensional viscous Burgers‘ equation with the initial condition at $t=0$ is given by
\begin{equation}
\begin{aligned}
& u_t + u\left(\sum_{i=1}^d\frac{\partial u(\bx)}{\partial \bx_i}\right) - \nu\left(\sum_{i=1}^d \frac{\partial^2 u(\bx)}{\partial \bx_i^2}\right) = 0, \bx \in \mathbb{R}^d, t\in [0,1].\\
& u(\bx,t=0) = \frac{1}{1+\exp{\left(\frac{\sum_{i=1}^d\bx_i}{2\nu}\right)}}.
\end{aligned}
\end{equation}
Its analytical solution is given by
\begin{equation}
u(\bx,t) = \frac{1}{1+\exp{\left(\frac{\sum_{i=1}^d\bx_i-dt/2}{2\nu}\right)}}.
\end{equation}
We choose $\nu = 0.5$ and sample points based on its corresponding SDE trajectory as before:
$t \sim \text{Unif}(0, 1), \bx \sim \mathcal{N}(t, 2 - t).$
We use the boundary augmentation given by the following model output to satisfy the initial condition automatically \cite{lu2021physics}:
\begin{align}
u_\theta(\bx, t) t + \frac{1}{1+\exp{\left(\frac{\sum_{i=1}^d\bx_i}{2\nu}\right)}}.
\end{align}
The solution of this Burgers' equation is highly complex and exhibits stiff regions where $\sum_{i=1}^d \bx_i = dt$. In these regions, the PDE solution experiences abrupt changes. This equation helps us evaluate the performance of RS-PINN on nonlinear equations with complex stiff solutions.

Here are the implementation details. The model is a 4-layer fully connected network with 128 hidden units, which is trained via Adam \cite{kingma2014adam} for 10K epochs, with an initial learning rate 1e-3, which linearly exponentially with exponent 0.9995. We select 100 random residual points at each Adam epoch and 20K fixed testing points from the SDE trajectory. The sample sizes for randomized smoothing in both training and testing are 1024 and 128, respectively, and the variance of Gaussian noise is $\sigma=1e-2$.

\begin{table}[htbp]\centering
\begin{tabular}{|c|c|c|c|c|}
\hline
Viscous Burgers' Equation & 5D & $10$D & 20D & 25D \\ \hline
Biased & 1.085E-02&	5.293E-02&	7.672E-02&	5.698E-02\\ \hline
Unbiased1 & 1.089E-03&	2.412E-03&	1.497E-02&	2.841E-02 \\ \hline
Unbiased2 & \textbf{1.030E-03} & \textbf{2.411E-03} & \textbf{1.165E-02} & \textbf{2.783E-02} \\ \hline
\end{tabular}
\caption{Results for the viscous Burgers' equation.}
\label{tab:burgers}
\end{table}

\begin{figure}[htbp]
\centering
\includegraphics[scale=0.24]{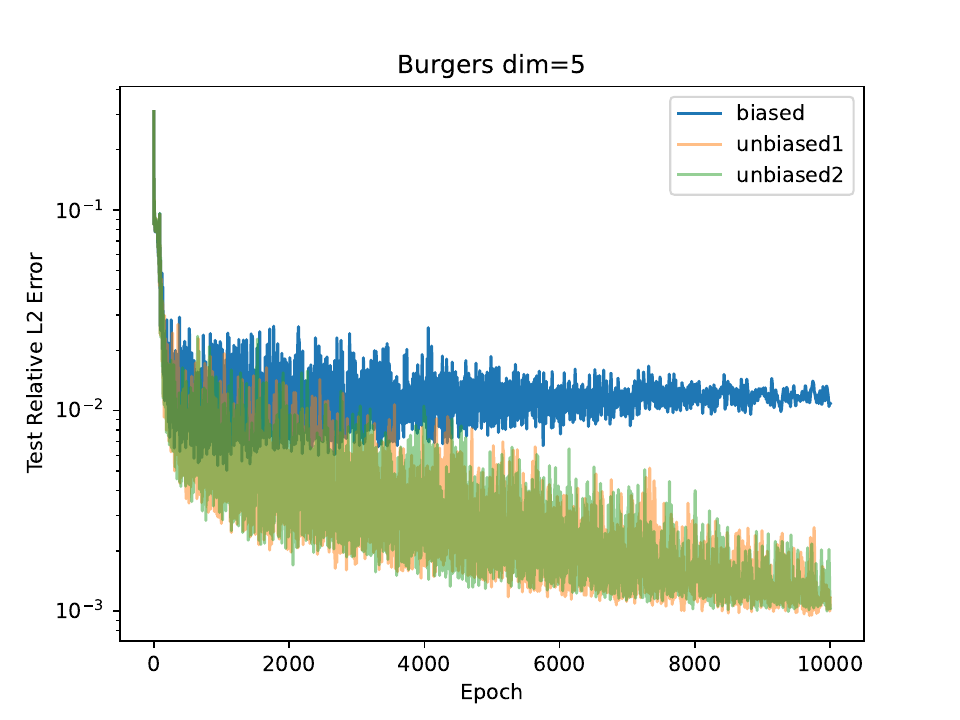}
\includegraphics[scale=0.24]{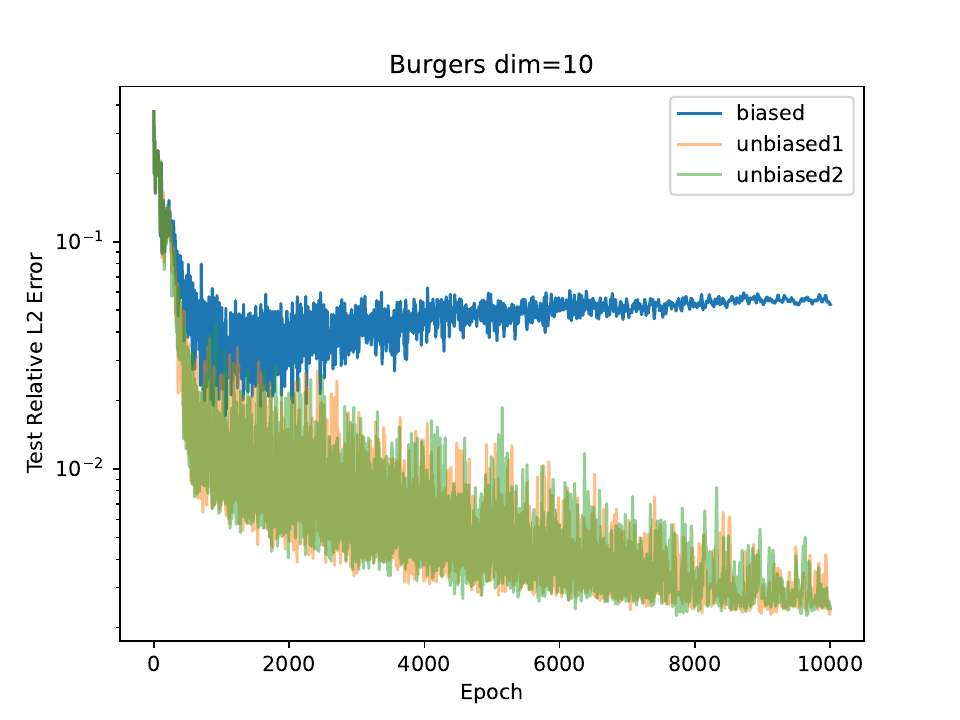}
\includegraphics[scale=0.24]{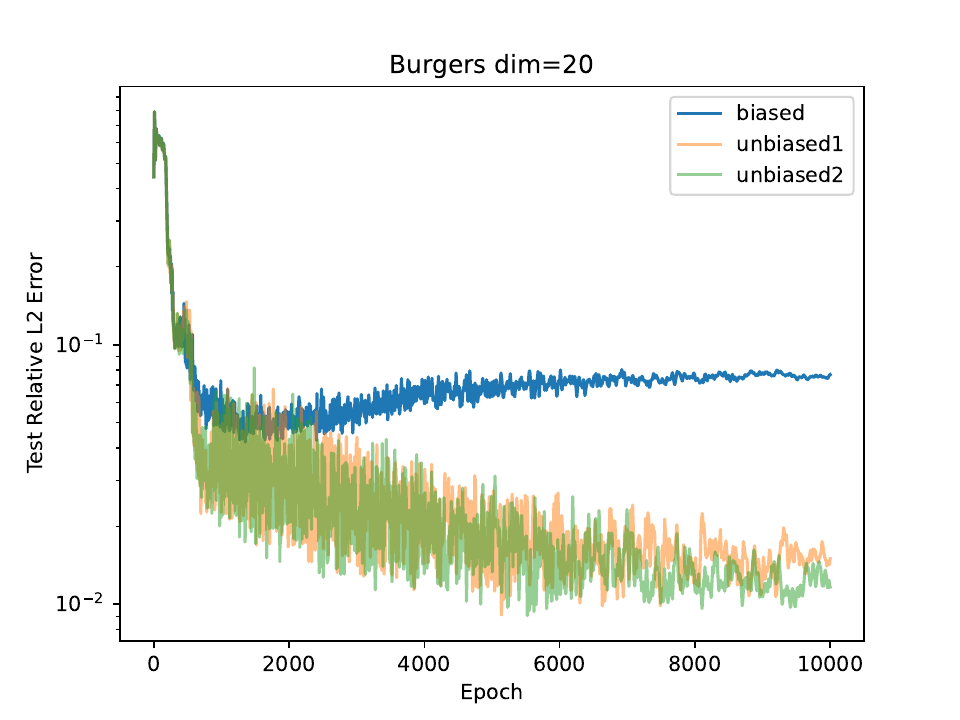}
\includegraphics[scale=0.24]{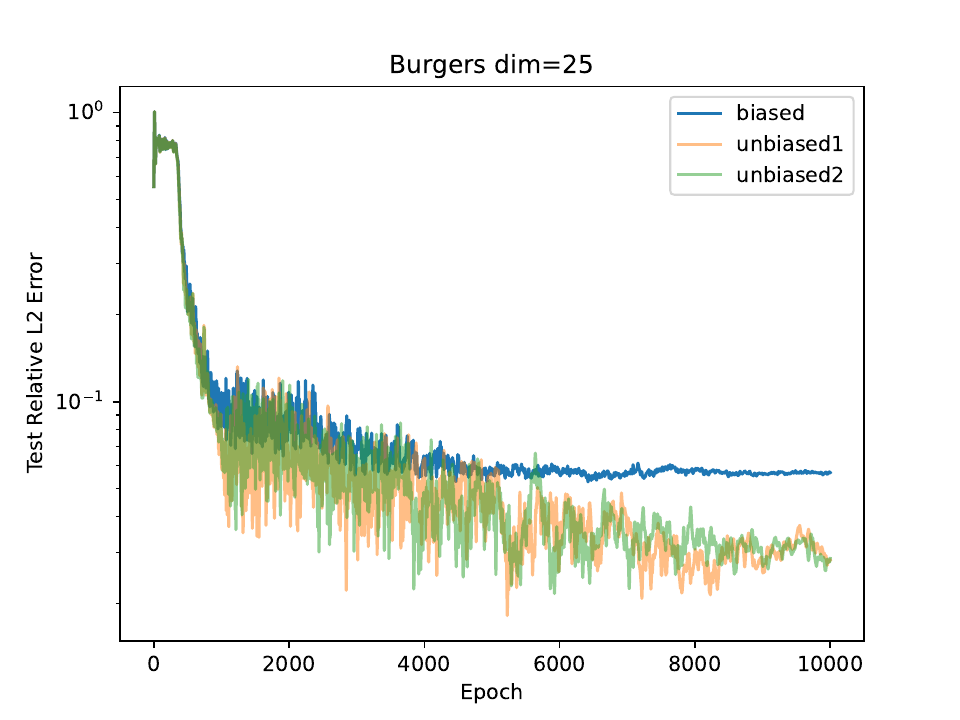}
\caption{Convergence curves for the Burgers' equation. In this example, correcting the bias from the MSE loss (unbiased1) suffices to achieve optimal performance, while the unbiased2 method can only slightly improve over the unbiased1 method.}
\label{fig:burgers}
\end{figure}

The results for the viscous Burgers' equation are shown in Table \ref{tab:burgers}, and convergence curves with respect to epoch are shown in Figure \ref{fig:burgers}. 
In all-dimensional cases in this example, unbiased1 and unbiased2 consistently outperform biased, underscoring once again the critical significance of unbiasedness for the convergence of RS-PINN. However, in contrast to the previous HJB equation case, unbiased1 alone is sufficient to achieve excellent results here, rendering unbiased2 unnecessary. Hence, the effect of bias-variance trade-off differs in various PDEs.

\subsection{Allen-Cahn and Sine-Gordon PDEs with Anisotropic Solution}
Here, we aim to consider nonseparable and anisotropic solutions for nonlinear PDEs to form complicated and nontrivial high-dimensional PDEs:
\begin{equation}
u_{\text{exact}}(\bx) = \left(1 - \Vert \bx \Vert_2^2\right)\left(\sum_{i=1}^{d-1}  c_i \sin(\bx_i +\cos(\bx_{i+1})+\bx_{i+1}\cos(\bx_i))\right),
\end{equation}
where $c_i \sim \mathcal{N}(0, 1)$.
We do not want the boundary to leak most information about the exact solution, and thus, the term $1 - \Vert \bx \Vert_2^2$ is added for a zero boundary condition.
In addition to the exact solution, the following PDEs defined within the unit ball $\mathbb{B}^d$ associated with zero boundary conditions on the unit sphere are under consideration:
\begin{itemize}
\item Allen-Cahn equation
\begin{equation}
\Delta u(\bx) + u(\bx) - u(\bx)^3 = g(\bx), \quad \bx\in \mathbb{B}^d,
\end{equation}
where $g(\bx) = \Delta u_{\text{exact}}(\bx) + u_{\text{exact}}(\bx) - u_{\text{exact}}(\bx)^3$.
\item Sine-Gordon equation
\begin{equation}
\Delta u(\bx) + \sin\left(u(\bx) \right) = g(\bx), \quad \bx \in \mathbb{B}^d,
\end{equation}
where $g(\bx) = \Delta u_{\text{exact}}(\bx) + \sin\left(u_{\text{exact}}(\bx) \right)$.
\end{itemize}
These PDEs exhibit different levels of nonlinearity. Allen-Cahn involves third-order nonlinearity, and Sine-Gordon's nonlinearity stems from the term $\sin(u)$, making it infinite-order nonlinear in theory. It is not feasible to achieve a completely unbiased version for this case. However, we will demonstrate that correcting the bias originating from the mean square error loss in the PINN loss is sufficient.

Here are the implementation details. The model is a 4-layer fully connected network with 128 hidden units, which is trained via Adam \cite{kingma2014adam} for 10K epochs, with an initial learning rate 1e-3, which linearly decays to zero at the end of the optimization. We select 100 random residual points at each Adam epoch and 20K fixed testing points uniformly from the unit ball. The sample size for randomized smoothing in both training and testing is 128, and the variance of Gaussian noise is $\sigma=1e-2$. We adopt the following model structure to satisfy the zero boundary condition with hard constraint and to avoid the boundary loss \cite{lu2021physics}:
\begin{equation}
u^{\text{RS}}_\theta(\bx) = (1 - \Vert\bx\Vert_2^2) u_\theta(\bx),
\end{equation}
where $u_\theta(\bx)$ is the randomized smoothing neural network and $u^{\text{RS}}_\theta(\bx)$ is the boundary-augmented model. We repeat our experiment 5 times with 5 independent random seeds.

\begin{table}[]
\centering
\begin{tabular}{|c|c|c|c|}
\hline
Sine-Gordon & 10D & 100D & 10,00D \\ \hline
biased & 5.712E-3 & 7.835E-3 & 6.744E-4 \\ \hline
unbiased1 & 1.410E-3 & 7.223E-3 & 6.647E-3 \\ \hline
unbiased2 & N.A. & N.A. & N.A. \\ \hline
hybrid & \textbf{1.407E-3} & \textbf{7.209E-3} & \textbf{4.732E-4} \\ \hline
\end{tabular}
\begin{tabular}{|c|c|c|c|}
\hline
Allen-Cahn & 10D & 100D & 10,00D \\ \hline
biased & 5.062E-3 & 7.923E-3 & 5.504E-4 \\ \hline
unbiased1 & 3.233E-3 & 7.298E-3 & 1.308E-3 \\ \hline
unbiased2 & 3.217E-3 & 7.293E-3 & 1.957E-2 \\ \hline
hybrid & \textbf{2.768E-3} & \textbf{7.285E-3} & \textbf{4.856E-4} \\ \hline
\end{tabular}
\caption{Results for the Sine-Gordon PDE (first row) and the Allen-Cahn PDE (second row) with anisotropic exact
solutions. Note that since the Sine-Gordon contains a sine nonlinearity, its unbiased2 version does not exist.}
\label{tab:anisotropic}
\end{table}

\begin{figure}[htbp]
\centering
\includegraphics[scale=0.3]{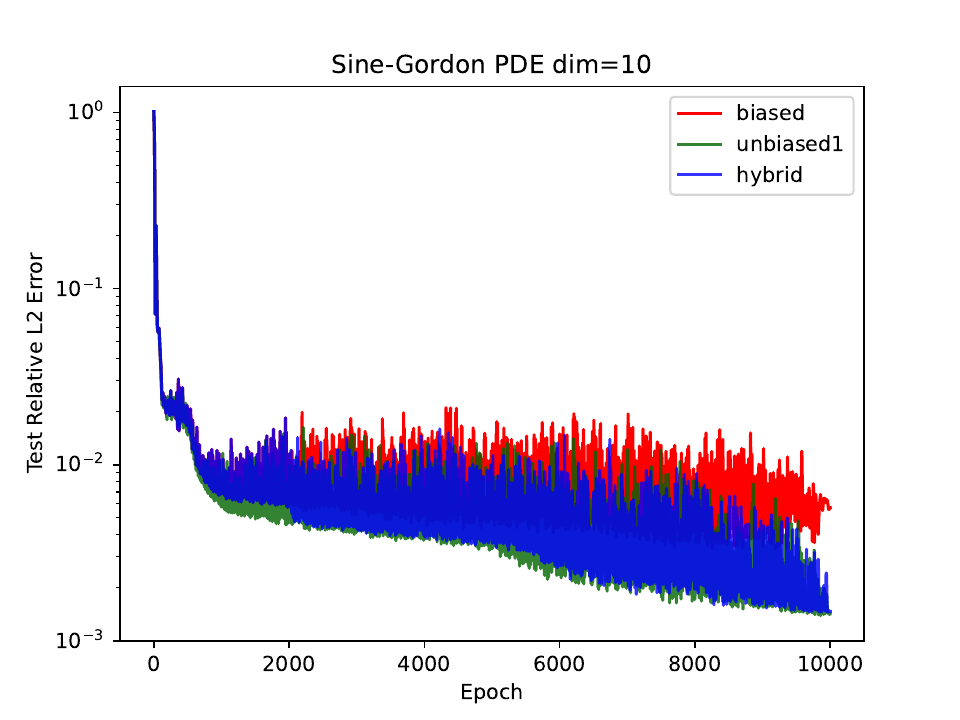}
\includegraphics[scale=0.3]{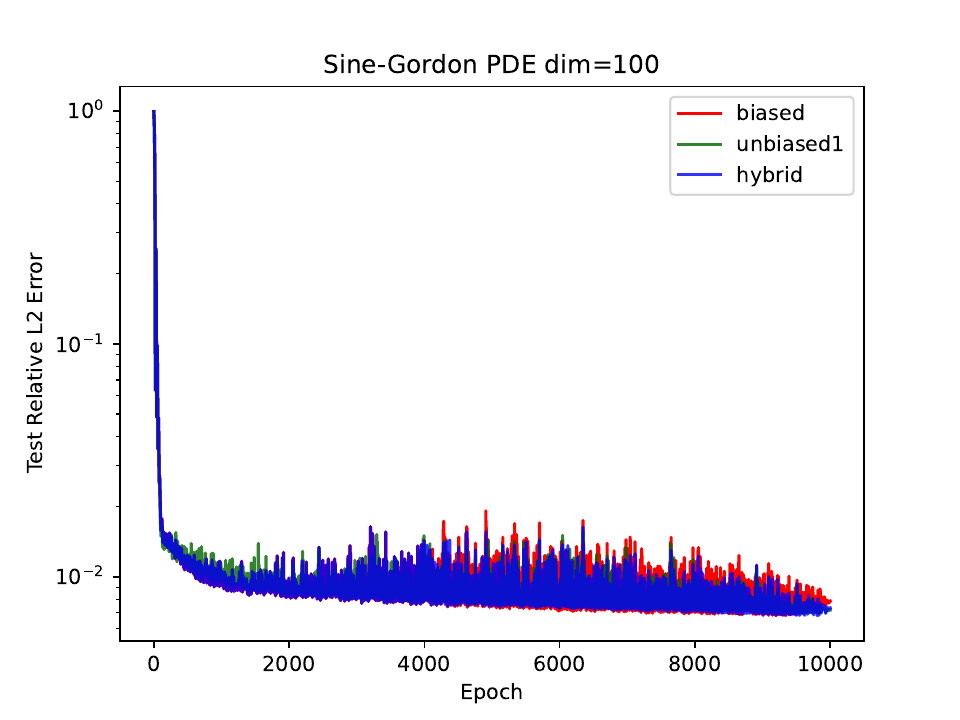}
\includegraphics[scale=0.3]{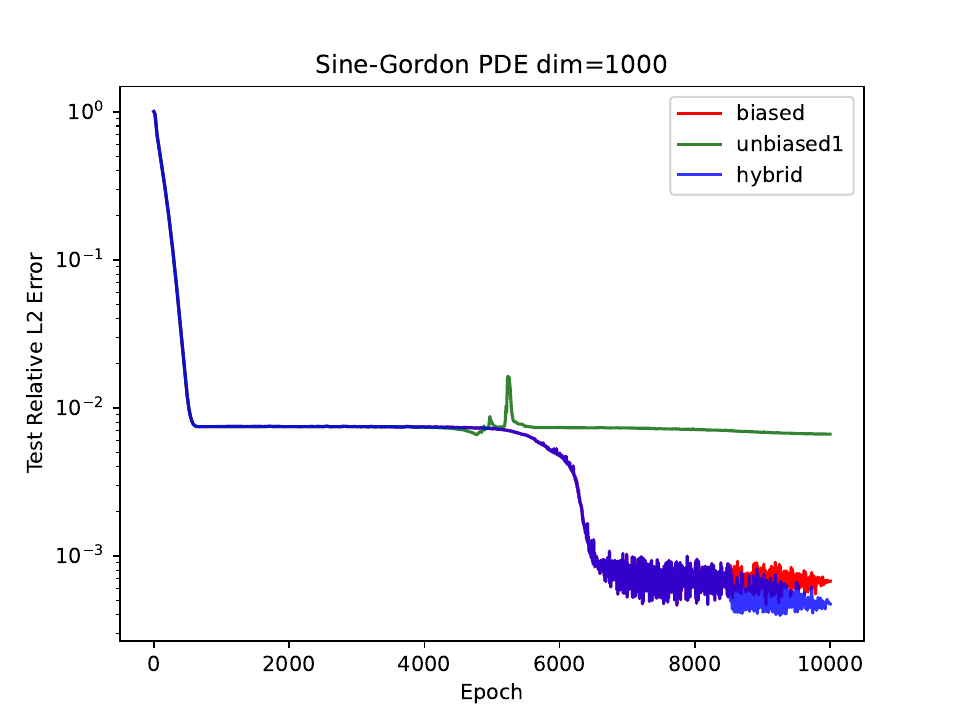}
\includegraphics[scale=0.3]{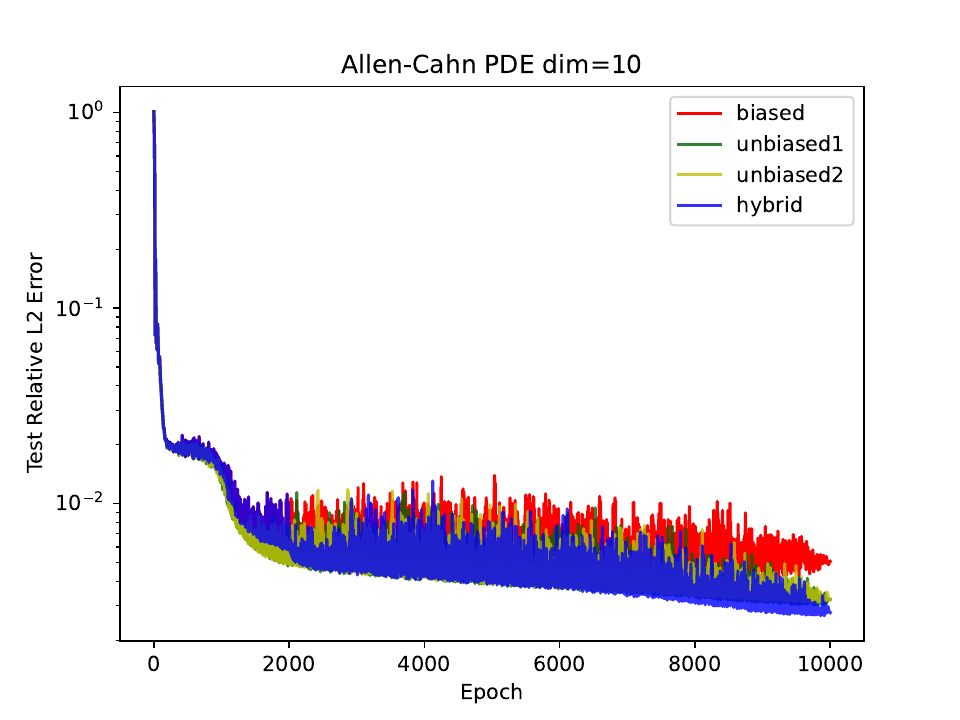}
\includegraphics[scale=0.3]{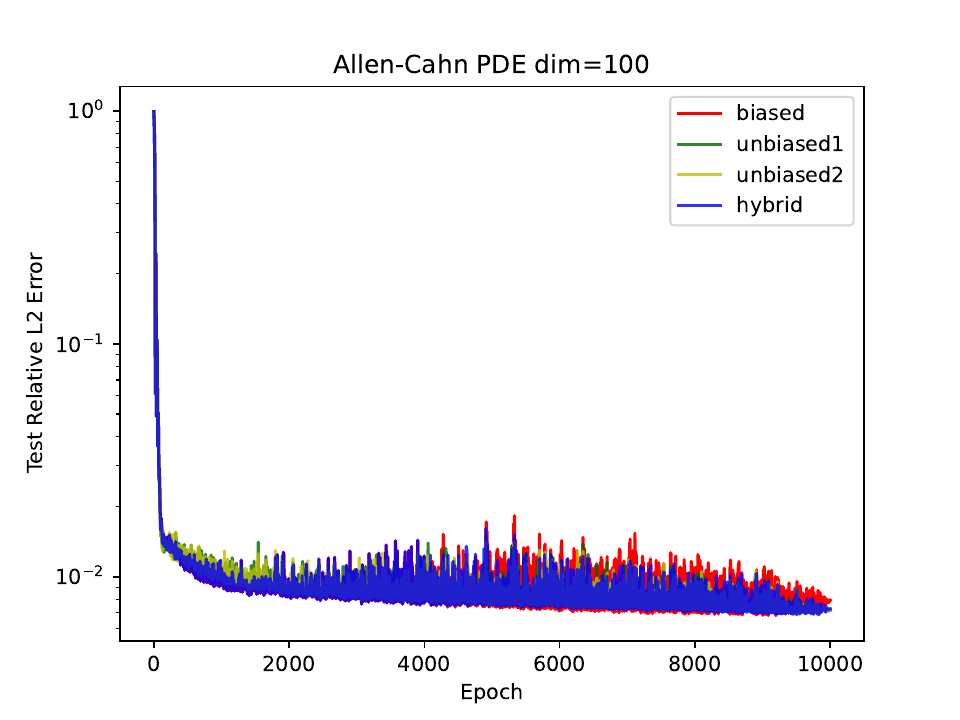}
\includegraphics[scale=0.3]{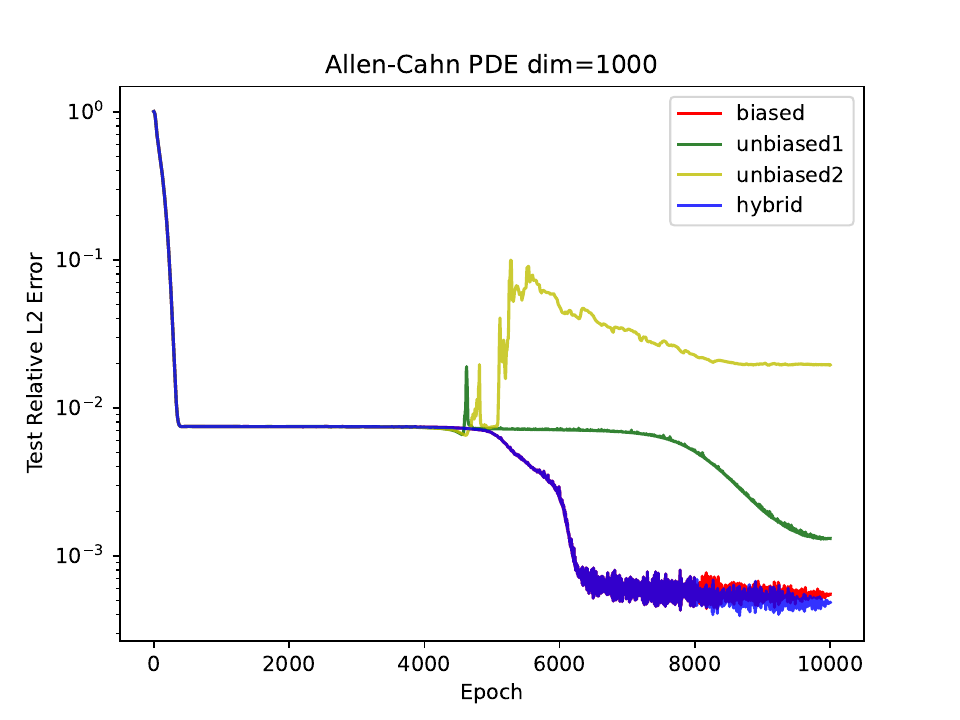}
\caption{Results for the Sine-Gordon PDE (first row) and the Allen-Cahn PDE (second row) with anisotropic exact solutions.}
\label{fig:anisotropic}
\end{figure}

The final convergence results and the convergence curves are shown in Table \ref{tab:anisotropic} and Figure \ref{fig:anisotropic}, respectively. Overall, RS-PINN is able to deal with highly complicated and anisotropic PDE solutions in high dimensions. In terms of the comparison between the biased, unbiased, and hybrid versions of RS-PINN, we obtain similar observations as the first linear Fokker-Planck equation. Concretely, 
in the relatively lower 10D scenarios, unbiased versions surpass the biased version, primarily because at lower dimensions, the smaller variance of the unbiased version makes its unbiasedness more crucial for convergence. However, in 100D, the relatively larger variance of the unbiased version balances its negative effects with the positive aspects of its unbiasedness, resulting in similar performances between unbiased and biased versions. Moving to higher dimensions, particularly in the 1000D scenario, the considerable variance of the unbiased version prevents its convergence. Additionally, in the Allen-Cahn equation, unbiased2, with its amplified variance due to increased sampling, performs worse than unbiased1 with relatively less sampling. In this context, the hybrid version, starting with the biased version to reach an acceptable solution and fine-tuning with unbiased2, achieves the optimal outcome. Among all the dimensions, the hybrid version is the most stable, since it incorporates the low variance of the biased version with the optimal convergence results by unbiased version-based fine-tuning.

\section{Summary}
We have developed an extension of Physics-Informed Neural Networks (PINNs) and their Randomized Smoothing variant (RS-PINN) to address computational challenges in high-dimensional scenarios. The identified biases in RS-PINN, originating from the nonlinearity of the Mean Squared Error (MSE) loss and the inherent nonlinearity of the PDE, have been systematically corrected using tailored techniques. The derivation of an unbiased RS-PINN has allowed us to investigate its attributes, comparing them with the biased version and paving the way for the formulation of a novel combined hybrid approach.

Our proposed bias-variance trade-off strategy, incorporating both biased and unbiased versions, introduces a novel perspective on optimizing simultaneously convergence speed and accuracy. By strategically employing the rapid convergence of the biased version in the initial stages and transitioning to the accuracy of the unbiased version for fine-tuning, we strike a balance that adapts to the dynamic nature of the optimization process.

This work contributes not only to the advancement of RS-PINN methodology but also provides a new paradigm to address biases in high-dimensional scenarios. The presented guidelines offer practical insights for navigating the complexities of biased and unbiased RS-PINN implementations. The extensive experimental validation across various high-dimensional PDEs underscores the efficacy of our bias correction techniques, reinforcing the versatility and applicability of the hybrid RS-PINN approach.

\section*{Acknowledgement}
This research of ZH, ZY, YW, and KK is partially supported by the National Research Foundation Singapore under the AI Singapore Programme (AISG Award No: AISG2-TC-2023-010-SGIL) and the Singapore Ministry of Education Academic Research Fund Tier 1 (Award No: T1 251RES2207).
The work of GEK was supported by the MURI-AFOSR FA9550-20-1-0358 projects and by the DOE SEA-CROGS project (DE-SC0023191). GEK was also supported by the ONR Vannevar Bush Faculty Fellowship (N00014-22-1-2795). 

\newpage
\appendix
\section{Proof}\label{appendix:A}
\subsection{Proof of Theorem \ref{thm:unbiased1}}
\begin{proof}
Thanks to the independence between the random variables $\delta_i$ and $\delta_i'$
\begin{equation}
\begin{aligned}
\mathbb{E}_{\delta,\delta'}\left[{L}^{(1)}_b(\theta)\right] &= 
\mathbb{E}_{\delta,\delta'}\left[\left(\frac{1}{K}\sum_{i=1}^Kf(\bx+\delta_i';\theta) - g(\bx)\right) \left(\frac{1}{K}\sum_{i=1}^Kf(\bx+\delta_i;\theta) - g(\bx)\right) \right]\\
&= 
\mathbb{E}_{\delta'}\left[\left(\frac{1}{K}\sum_{i=1}^Kf(\bx+\delta_i';\theta) - g(\bx)\right)\right] \cdot\mathbb{E}_{\delta}\left[\left(\frac{1}{K}\sum_{i=1}^Kf(\bx+\delta_i;\theta) - g(\bx)\right) \right]\\
&= 
\left(\mathbb{E}_{\delta'}\left[\frac{1}{K}\sum_{i=1}^Kf(\bx+\delta_i';\theta)\right] - g(\bx)\right) \cdot\left(\mathbb{E}_{\delta}\left[\frac{1}{K}\sum_{i=1}^Kf(\bx+\delta_i;\theta)\right]
- g(\bx)\right)\\
&= 
\left(\mathbb{E}_{\delta}\left [f(\bx+\delta;\theta)\right] - g(\bx)\right) \cdot\left(\mathbb{E}_{\delta}\left[f(\bx+\delta;\theta)\right]
- g(\bx)\right)\\
&=L_b(\theta).
\end{aligned}
\end{equation}
For their gradients with respect to $\theta$, by the chain rule
\begin{equation}
\begin{aligned}
\mathbb{E}_{\delta,\delta'}\left[\frac{\partial{L}^{(1)}_b(\theta)}{\partial \theta}\right] &= 2\mathbb{E}_{\delta,\delta'}\left[\left(\frac{1}{K}\sum_{i=1}^Kf(\bx+\delta';\theta) - g(\bx)\right) \cdot \frac{\partial}{\partial \theta}\left(\frac{1}{K}\sum_{i=1}^Kf(\bx+\delta_i;\theta) - g(\bx)\right)\right]\\
&= 2\mathbb{E}_{\delta'}\left[\left(\frac{1}{K}\sum_{i=1}^Kf(\bx+\delta_i';\theta) - g(\bx)\right)\right] \cdot\mathbb{E}_{\delta}\left[\frac{\partial}{\partial \theta}\left(\frac{1}{K}\sum_{i=1}^Kf(\bx+\delta_i;\theta) - g(\bx)\right) \right]
\\
&= 2\mathbb{E}_{\delta}\left[\left(f(\bx+\delta;\theta) - g(\bx)\right)\right] \cdot\mathbb{E}_{\delta}\left[\frac{\partial}{\partial \theta}\left(f(\bx+\delta;\theta) - g(\bx)\right) \right]
\\
&= \mathbb{E}_{\delta}\left[\frac{\partial}{\partial \theta}\left(f(\bx+\delta;\theta) - g(\bx)\right)^2 \right]
\\
& = \frac{\partial{L}_b(\theta)}{\partial \theta}.
\end{aligned}
\end{equation}
\end{proof}

\subsection{Proof of Theorem \ref{thm:unbiased2}}
\begin{proof}
\begin{equation}
\begin{aligned}
\mathbb{E}_{\delta,\delta',\delta'',\delta'''}\left[{L}^{(2)}_r(\theta)\right] &= \left(\left\langle\mathbb{E}_{\delta}\left[\frac{1}{K}\sum_{i=1}^K\frac{\delta_i}{\sigma^2}f(\bx+\delta_i;\theta)\right], \mathbb{E}_{\delta'}\left[\frac{1}{K}\sum_{i=1}^K\frac{\delta_i'}{\sigma^2}f(\bx+\delta_i';\theta)\right]\right\rangle-g(\bx)\right)\times\\
&\quad\left(\left\langle\mathbb{E}_{\delta''}\left[\frac{1}{K}\sum_{i=1}^K\frac{\delta_i''}{\sigma^2}f(\bx+\delta_i'';\theta)\right], \mathbb{E}_{\delta'''}\left[\frac{1}{K}\sum_{i=1}^K\frac{\delta_i'''}{\sigma^2}f(\bx+\delta_i''';\theta)\right]\right\rangle-g(\bx)\right) \\
&= \left(\left\langle\mathbb{E}_{\delta}\left[\frac{\delta}{\sigma^2}f(\bx+\delta;\theta)\right], \mathbb{E}_{\delta}\left[\frac{\delta}{\sigma^2}f(\bx+\delta;\theta)\right]\right\rangle-g(\bx)\right)\times\\
&\quad\left(\left\langle\mathbb{E}_{\delta}\left[\frac{\delta}{\sigma^2}f(\bx+\delta;\theta)\right], \mathbb{E}_{\delta}\left[\frac{\delta}{\sigma^2}f(\bx+\delta;\theta)\right]\right\rangle-g(\bx)\right) \\
&= L_r(\theta).
\end{aligned}
\end{equation}
\begin{equation}
\begin{aligned}
\mathbb{E}_{\delta,\delta',\delta'',\delta'''}\left[\frac{\partial}{\partial \theta}{L}^{(2)}_r(\theta)\right] &= 2\left(\left\langle\mathbb{E}_{\delta}\left[\frac{1}{K}\sum_{i=1}^K\frac{\delta_i}{\sigma^2}f(\bx+\delta_i;\theta)\right], \mathbb{E}_{\delta'}\left[\frac{1}{K}\sum_{i=1}^K\frac{\delta_i'}{\sigma^2}f(\bx+\delta_i';\theta)\right]\right\rangle-g(\bx)\right)\times\\
&\quad\frac{\partial}{\partial\theta}\left(\left\langle\mathbb{E}_{\delta''}\left[\frac{1}{K}\sum_{i=1}^K\frac{\delta_i''}{\sigma^2}f(\bx+\delta_i'';\theta)\right], \mathbb{E}_{\delta'''}\left[\frac{1}{K}\sum_{i=1}^K\frac{\delta_i'''}{\sigma^2}f(\bx+\delta_i''';\theta)\right]\right\rangle-g(\bx)\right) \\
&= 2\left(\left\langle\mathbb{E}_{\delta}\left[\frac{\delta}{\sigma^2}f(\bx+\delta;\theta)\right], \mathbb{E}_{\delta}\left[\frac{\delta}{\sigma^2}f(\bx+\delta;\theta)\right]\right\rangle-g(\bx)\right)\times\\
&\quad\frac{\partial}{\partial\theta}\left(\left\langle\mathbb{E}_{\delta}\left[\frac{\delta}{\sigma^2}f(\bx+\delta;\theta)\right], \mathbb{E}_{\delta}\left[\frac{\delta}{\sigma^2}f(\bx+\delta;\theta)\right]\right\rangle-g(\bx)\right) \\
&=
\frac{\partial}{\partial \theta}L_r(\theta).
\end{aligned}
\end{equation}
\end{proof}

\section{Detailed Loss Function}\label{appendix:B}
\subsection{Hamilton-Jacobi-Bellman PDE}
The previous HJB equation given by
\begin{equation}
u_t = \Delta_{\bx} u - \Vert\nabla_{\bx} u(\bx)\Vert^2.
\end{equation}
which has a nonlinearity order of two due to the $\Vert \nabla_{\bx} u \Vert^2$ term.

As before, to simplify the discussion, we assume the residual condition is $g(\bx)$, and we ignore the linear term of the HJB equation and only consider the nonlinear term:
$
\Vert\nabla_{\bx} u(\bx)\Vert^2.
$
Recall that
$
\nabla_{\bx} u(\bx; \theta) = \mathbb{E}_{\delta \sim \mathcal{N}(0, \sigma^2\boldsymbol{I})}\left[\frac{\delta}{\sigma^2}f(\bx+\delta;\theta)\right].
$

The true residual loss on a residual point $\bx$ is:
\begin{equation}
L_r(\theta) = \left(\Vert\nabla_{\bx}u(\bx; \theta) \Vert^2 - g(\bx)\right)^2 
\end{equation}

The biased loss from He et al. \cite{he2023learning} is
\begin{equation}
{L}^{(0)}_r(\theta) = \left(\left\|\frac{1}{K}\sum_{i=1}^K\frac{\delta_i}{\sigma^2}f(\bx+\delta_i;\theta)\right\|^2 - g(\bx)\right)^2. 
\end{equation}

The unbiased1 loss by correcting the bias from the nonlinear MSE loss solely is
\begin{equation}
{L}^{(1)}_r(\theta) = \left(\left\|\frac{1}{K}\sum_{i=1}^K\frac{\delta_i}{\sigma^2}f(\bx+\delta_i;\theta)\right\|^2-g(\bx)\right)\times\left(\left\|\frac{1}{K}\sum_{i=1}^K\frac{\delta_i'}{\sigma^2}f(\bx+\delta_i';\theta)\right\|^2-g(\bx)\right). 
\end{equation}

The unbiased2 loss by correcting the biases from the MSE loss and the PDE nonlinearity is
\begin{equation}
\begin{aligned}
{L}^{(2)}_r(\theta) &= \left(\left\langle\frac{1}{K}\sum_{i=1}^K\frac{\delta_i}{\sigma^2}f(\bx+\delta_i;\theta), \frac{1}{K}\sum_{i=1}^K\frac{\delta_i'}{\sigma^2}f(\bx+\delta_i';\theta)\right\rangle-g(\bx)\right)\times\\
&\quad\left(\left\langle\frac{1}{K}\sum_{i=1}^K\frac{\delta_i''}{\sigma^2}f(\bx+\delta_i'';\theta), \frac{1}{K}\sum_{i=1}^K\frac{\delta_i'''}{\sigma^2}f(\bx+\delta_i''';\theta)\right\rangle-g(\bx)\right). 
\end{aligned}
\end{equation}

\subsection{Allen-Cahn PDE}
For the Allen-Cahn (AC) PDE given by
\begin{equation}
u_t = \Delta_{\bx} u + u - u^3,
\end{equation}
its nonlinearity stems from the term $u^3$, which is a cubic function. Therefore, the nonlinearity order of the AC equation is three.
During the model training, we are required to sample the three $u$ terms in $u^3 = u \cdot u \cdot u$ independently for unbiased gradients.

As before, to simplify the discussion, we assume the residual condition is $g(\bx)$, and we ignore the linear term of the HJB equation and only consider the nonlinear term:
$
u^3,
$.
Recall that
$
u(\bx; \theta) = \mathbb{E}_{\delta \sim \mathcal{N}(0, \sigma^2\boldsymbol{I})}\left[f(\bx+\delta;\theta)\right].
$

The true residual loss on a residual point $\bx$ is:
\begin{equation}
L_r(\theta) = \left(u(\bx; \theta)^3 - g(\bx)\right)^2 
\end{equation}

The biased loss from He et al. \cite{he2023learning} is
\begin{equation}
{L}^{(0)}_r(\theta) = \left(\left(\frac{1}{K}\sum_{i=1}^Kf(\bx+\delta_i;\theta)\right)^3 - g(\bx)\right)^2. 
\end{equation}

The unbiased1 loss by correcting the bias from the nonlinear MSE loss solely is
\begin{equation}
{L}^{(1)}_r(\theta) = \left(\left(\frac{1}{K}\sum_{i=1}^Kf(\bx+\delta_i;\theta)\right)^3-g(\bx)\right)\times\left(\left(\frac{1}{K}\sum_{i=1}^Kf(\bx+\delta_i';\theta)\right)^3-g(\bx)\right). 
\end{equation}

The unbiased2 loss by correcting the biases from the MSE loss and the PDE nonlinearity is
\begin{equation}
\begin{aligned}
{L}^{(2)}_r(\theta) &= \left(\left(\frac{1}{K}\sum_{i=1}^Kf(\bx+\delta_i^{(1)};\theta)\right)\left(\frac{1}{K}\sum_{i=1}^Kf(\bx+\delta_i^{(2)};\theta)\right)\left(\frac{1}{K}\sum_{i=1}^Kf(\bx+\delta_i^{(3)};\theta)\right)-g(\bx)\right)\times\\
&\quad\left(\left(\frac{1}{K}\sum_{i=1}^Kf(\bx+\delta_i^{(4)};\theta)\right)\left(\frac{1}{K}\sum_{i=1}^Kf(\bx+\delta_i^{(5)};\theta)\right)\left(\frac{1}{K}\sum_{i=1}^Kf(\bx+\delta_i^{(6)};\theta)\right)-g(\bx)\right),
\end{aligned}
\end{equation}
where $\delta_i^{(1)},\delta_i^{(2)},\delta_i^{(3)},\delta_i^{(4)},\delta_i^{(5)},\delta_i^{(6)}$ are six independent groups of Gaussian random samples.

\subsection{Viscous Burgers' PDE}
For the viscous Burgers' equation:
\begin{equation}
\begin{aligned}
& u_t + u\left(\sum_{i=1}^d\frac{\partial u(\bx)}{\partial \bx_i}\right) - \nu\left(\sum_{i=1}^d \frac{\partial^2 u(\bx)}{\partial \bx_i^2}\right) = 0, \bx \in \mathbb{R}^d, t\in [0,1],
\end{aligned}
\end{equation}
its nonlinearity stems from the term $u\sum_{i=1}^du_{\bx_i}$. Therefore, the nonlinearity order of the viscous Burgers' PDE is three.
During the model training, we are required to sample the $u$ and $\nabla_{\bx} u$ independently for unbiased gradients.

As before, to simplify the discussion, we assume the residual condition is $g(\bx)$, and we ignore the linear term of the HJB equation and only consider the nonlinear term:
$u\sum_{i=1}^du_{\bx_i}$. Furthermore, we will use the operator $\operatorname{sum}$ to denote the element-wise sum of a vector.

The true residual loss on a residual point $\bx$ is:
\begin{equation}
L_r(\theta) = \left(u(\bx;\theta)\sum_{i=1}^d\frac{\partial}{\partial\bx_i}u(\bx;\theta) - g(\bx)\right)^2 
\end{equation}

The biased loss from He et al. \cite{he2023learning} is
\begin{equation}
{L}^{(0)}_r(\theta) = \left(\left(\frac{1}{K}\sum_{i=1}^Kf(\bx+\delta_i;\theta)\right)\operatorname{sum}\left(\frac{1}{K}\sum_{i=1}^K\frac{\delta_i}{\sigma^2}f(\bx+\delta_i;\theta)\right) - g(\bx)\right)^2 .
\end{equation}

The unbiased1 loss by correcting the bias from the nonlinear MSE loss solely is
\begin{equation}
\begin{aligned}
{L}^{(1)}_r(\theta) &= \left(\left(\frac{1}{K}\sum_{i=1}^Kf(\bx+\delta_i;\theta)\right)\operatorname{sum}\left(\frac{1}{K}\sum_{i=1}^K\frac{\delta_i}{\sigma^2}f(\bx+\delta_i;\theta)\right) - g(\bx)\right) \times\\
&\quad \left(\left(\frac{1}{K}\sum_{i=1}^Kf(\bx+\delta_i';\theta)\right)\operatorname{sum}\left(\frac{1}{K}\sum_{i=1}^K\frac{\delta_i'}{\sigma^2}f(\bx+\delta_i';\theta)\right) - g(\bx)\right).
\end{aligned}
\end{equation}

The unbiased2 loss by correcting the biases from the MSE loss and the PDE nonlinearity is
\begin{equation}
\begin{aligned}
{L}^{(2)}_r(\theta) &= \left(\left(\frac{1}{K}\sum_{i=1}^Kf(\bx+\delta_i;\theta)\right)\operatorname{sum}\left(\frac{1}{K}\sum_{i=1}^K\frac{\delta_i'}{\sigma^2}f(\bx+\delta_i';\theta)\right) - g(\bx)\right) \times\\
&\quad \left(\left(\frac{1}{K}\sum_{i=1}^Kf(\bx+\delta_i'';\theta)\right)\operatorname{sum}\left(\frac{1}{K}\sum_{i=1}^K\frac{\delta_i'''}{\sigma^2}f(\bx+\delta_i''';\theta)\right) - g(\bx)\right).
\end{aligned}
\end{equation}

\subsection{Sine-Gordon PDE}
However, our method cannot deal with nonlinear like $\sin (u)$ in the Sine-Gorden PDE:
\begin{equation}
u_t = \Delta_{\bx} u + \sin (u)
\end{equation}
However, we can increase the sample size $K$ in the Monte Carlo to minimize the bias.
Fortunately, correcting the forward and backward bias does suffice for the Sine-Gordon equation, i.e., we can still correct the bias from the forward and backward passes to improve over the original formulation in He et al. \cite{he2023learning}, which is actually sufficient to obtain a low error in high dimensions.

As before, to simplify the discussion, we assume the residual condition is $g(\bx)$, and we ignore the linear term of the HJB equation and only consider the nonlinear term:
$
\sin \left(u(\bx)\right).
$

The true residual loss on a residual point $\bx$ is:
\begin{equation}
L_r(\theta) = \left(\sin \left(u(\bx; \theta)\right) - g(\bx)\right)^2 
\end{equation}

The biased loss from He et al. \cite{he2023learning} is
\begin{equation}
{L}^{(0)}_r(\theta) = \left(\sin\left(\frac{1}{K}\sum_{i=1}^Kf(\bx+\delta_i;\theta)\right) - g(\bx)\right)^2 
\end{equation}

The unbiased1 loss by correcting the bias from the nonlinear MSE loss solely is
\begin{equation}
{L}^{(1)}_r(\theta) =  \left(\sin\left(\frac{1}{K}\sum_{i=1}^Kf(\bx+\delta_i;\theta)\right) - g(\bx)\right)\times \left(\sin\left(\frac{1}{K}\sum_{i=1}^Kf(\bx+\delta_i';\theta)\right) - g(\bx)\right) .
\end{equation}

The unbiased2 loss that corrects the two biases does not exist for this equation due to the sine nonlinearity.

\newpage
\bibliographystyle{plain}
\bibliography{main}

\begin{thebibliography}{10}

\bibitem{beck2021deep}
Christian Beck, Sebastian Becker, Patrick Cheridito, Arnulf Jentzen, and Ariel
  Neufeld.
\newblock Deep splitting method for parabolic pdes.
\newblock {\em SIAM Journal on Scientific Computing}, 43(5):A3135--A3154, 2021.

\bibitem{beck2019machine}
Christian Beck, Weinan E, and Arnulf Jentzen.
\newblock Machine learning approximation algorithms for high-dimensional fully
  nonlinear partial differential equations and second-order backward stochastic
  differential equations.
\newblock {\em Journal of Nonlinear Science}, 29:1563--1619, 2019.

\bibitem{beck2020overcoming}
Christian Beck, Lukas Gonon, and Arnulf Jentzen.
\newblock Overcoming the curse of dimensionality in the numerical approximation
  of high-dimensional semilinear elliptic partial differential equations.
\newblock {\em arXiv preprint arXiv:2003.00596}, 2020.

\bibitem{beck2020overcoming_ac}
Christian Beck, Fabian Hornung, Martin Hutzenthaler, Arnulf Jentzen, and Thomas
  Kruse.
\newblock Overcoming the curse of dimensionality in the numerical approximation
  of allen--cahn partial differential equations via truncated full-history
  recursive multilevel picard approximations.
\newblock {\em Journal of Numerical Mathematics}, 28(4):197--222, 2020.

\bibitem{becker2020numerical}
Sebastian Becker, Ramon Braunwarth, Martin Hutzenthaler, Arnulf Jentzen, and
  Philippe von Wurstemberger.
\newblock Numerical simulations for full history recursive multilevel picard
  approximations for systems of high-dimensional partial differential
  equations.
\newblock {\em arXiv preprint arXiv:2005.10206}, 2020.

\bibitem{bettencourt2019taylormode}
Jesse Bettencourt, Matthew~J. Johnson, and David Duvenaud.
\newblock Taylor-mode automatic differentiation for higher-order derivatives in
  {JAX}.
\newblock In {\em Program Transformations for ML Workshop at NeurIPS 2019},
  2019.

\bibitem{chan2019machine}
Quentin Chan-Wai-Nam, Joseph Mikael, and Xavier Warin.
\newblock Machine learning for semi linear pdes.
\newblock {\em Journal of scientific computing}, 79(3):1667--1712, 2019.

\bibitem{CHIU2022114909}
Pao-Hsiung Chiu, Jian~Cheng Wong, Chinchun Ooi, My~Ha Dao, and Yew-Soon Ong.
\newblock Can-pinn: A fast physics-informed neural network based on
  coupled-automatic–numerical differentiation method.
\newblock {\em Computer Methods in Applied Mechanics and Engineering},
  395:114909, 2022.

\bibitem{cho2022separable}
Junwoo Cho, Seungtae Nam, Hyunmo Yang, Seok-Bae Yun, Youngjoon Hong, and
  Eunbyung Park.
\newblock Separable pinn: Mitigating the curse of dimensionality in
  physics-informed neural networks.
\newblock {\em arXiv preprint arXiv:2211.08761}, 2022.

\bibitem{pmlr-v97-cohen19c}
Jeremy Cohen, Elan Rosenfeld, and Zico Kolter.
\newblock Certified adversarial robustness via randomized smoothing.
\newblock In Kamalika Chaudhuri and Ruslan Salakhutdinov, editors, {\em
  Proceedings of the 36th International Conference on Machine Learning},
  volume~97 of {\em Proceedings of Machine Learning Research}, pages
  1310--1320. PMLR, 09--15 Jun 2019.

\bibitem{feng2023does}
Haozhe Feng, Tianyu Pang, Chao Du, Wei Chen, Shuicheng Yan, and Min Lin.
\newblock Does federated learning really need backpropagation?
\newblock {\em arXiv preprint arXiv:2301.12195}, 2023.

\bibitem{han2018solving}
Jiequn Han, Arnulf Jentzen, and Weinan E.
\newblock Solving high-dimensional partial differential equations using deep
  learning.
\newblock {\em Proceedings of the National Academy of Sciences},
  115(34):8505--8510, 2018.

\bibitem{han2017deep}
Jiequn Han, Arnulf Jentzen, et~al.
\newblock Deep learning-based numerical methods for high-dimensional parabolic
  partial differential equations and backward stochastic differential
  equations.
\newblock {\em Communications in mathematics and statistics}, 5(4):349--380,
  2017.

\bibitem{he2023learning}
Di~He, Shanda Li, Wenlei Shi, Xiaotian Gao, Jia Zhang, Jiang Bian, Liwei Wang,
  and Tie-Yan Liu.
\newblock Learning physics-informed neural networks without stacked
  back-propagation.
\newblock In {\em International Conference on Artificial Intelligence and
  Statistics}, pages 3034--3047. PMLR, 2023.

\bibitem{henry2017deep}
Pierre Henry-Labordere.
\newblock Deep primal-dual algorithm for bsdes: Applications of machine
  learning to cva and im.
\newblock {\em Available at SSRN 3071506}, 2017.

\bibitem{hu2021extended}
Zheyuan Hu, Ameya~D. Jagtap, George~Em Karniadakis, and Kenji Kawaguchi.
\newblock When do extended physics-informed neural networks (xpinns) improve
  generalization?
\newblock {\em SIAM Journal on Scientific Computing}, 44(5):A3158--A3182, 2022.

\bibitem{hu2023tackling}
Zheyuan Hu, Khemraj Shukla, George~Em Karniadakis, and Kenji Kawaguchi.
\newblock Tackling the curse of dimensionality with physics-informed neural
  networks.
\newblock {\em arXiv preprint arXiv:2307.12306}, 2023.

\bibitem{hure2020deep}
C{\^o}me Hur{\'e}, Huy{\^e}n Pham, and Xavier Warin.
\newblock Deep backward schemes for high-dimensional nonlinear pdes.
\newblock {\em Mathematics of Computation}, 89(324):1547--1579, 2020.

\bibitem{hutzenthaler2020overcoming}
Martin Hutzenthaler, Arnulf Jentzen, Thomas Kruse, Tuan Anh~Nguyen, and
  Philippe von Wurstemberger.
\newblock Overcoming the curse of dimensionality in the numerical approximation
  of semilinear parabolic partial differential equations.
\newblock {\em Proceedings of the Royal Society A}, 476(2244):20190630, 2020.

\bibitem{hutzenthaler2021multilevel}
Martin Hutzenthaler, Arnulf Jentzen, Thomas Kruse, et~al.
\newblock Multilevel picard iterations for solving smooth semilinear parabolic
  heat equations.
\newblock {\em Partial Differential Equations and Applications}, 2(6):1--31,
  2021.

\bibitem{jagtap2020adaptive}
Ameya~D Jagtap, Kenji Kawaguchi, and George~Em Karniadakis.
\newblock Adaptive activation functions accelerate convergence in deep and
  physics-informed neural networks.
\newblock {\em Journal of Computational Physics}, 404:109136, 2020.

\bibitem{ji2020three}
Shaolin Ji, Shige Peng, Ying Peng, and Xichuan Zhang.
\newblock Three algorithms for solving high-dimensional fully coupled fbsdes
  through deep learning.
\newblock {\em IEEE Intelligent Systems}, 35(3):71--84, 2020.

\bibitem{karniadakis2021physics}
George~Em Karniadakis, Ioannis~G Kevrekidis, Lu~Lu, Paris Perdikaris, Sifan
  Wang, and Liu Yang.
\newblock Physics-informed machine learning.
\newblock {\em Nature Reviews Physics}, 3(6):422--440, 2021.

\bibitem{kawaguchi2016deep}
Kenji Kawaguchi.
\newblock Deep learning without poor local minima.
\newblock In {\em Advances in neural information processing systems (NeurIPS)},
  pages 586--594, 2016.

\bibitem{icml2023kzxinfodl}
Kenji Kawaguchi, Zhun Deng, Xu~Ji, and Jiaoyang Huang.
\newblock How does information bottleneck help deep learning?
\newblock In {\em International Conference on Machine Learning (ICML)}, 2023.

\bibitem{kawaguchi2017generalization}
Kenji Kawaguchi, Leslie~Pack Kaelbling, and Yoshua Bengio.
\newblock Generalization in deep learning.
\newblock {\em Cambridge University Press}, 2022.

\bibitem{kingma2014adam}
Diederik~P Kingma and Jimmy Ba.
\newblock Adam: A method for stochastic optimization.
\newblock {\em ICLR}, 2015.

\bibitem{lecuyer2019certified}
Mathias Lecuyer, Vaggelis Atlidakis, Roxana Geambasu, Daniel Hsu, and Suman
  Jana.
\newblock Certified robustness to adversarial examples with differential
  privacy.
\newblock In {\em 2019 IEEE symposium on security and privacy (SP)}, pages
  656--672. IEEE, 2019.

\bibitem{lu2021physics}
Lu~Lu, Raphael Pestourie, Wenjie Yao, Zhicheng Wang, Francesc Verdugo, and
  Steven~G Johnson.
\newblock Physics-informed neural networks with hard constraints for inverse
  design.
\newblock {\em SIAM Journal on Scientific Computing}, 43(6):B1105--B1132, 2021.

\bibitem{lv2021hybrid}
Chunyue Lv, Lei Wang, and Chenming Xie.
\newblock A hybrid physics-informed neural network for nonlinear partial
  differential equation.
\newblock {\em arXiv preprint arXiv:2112.01696}, 2021.

\bibitem{mishra2020estimates}
Siddhartha Mishra and Roberto Molinaro.
\newblock Estimates on the generalization error of physics informed neural
  networks (pinns) for approximating pdes.
\newblock {\em arXiv preprint arXiv:2006.16144}, 2020.

\bibitem{pang2019fpinns}
Guofei Pang, Lu~Lu, and George~Em Karniadakis.
\newblock fpinns: Fractional physics-informed neural networks.
\newblock {\em SIAM Journal on Scientific Computing}, 41(4):A2603--A2626, 2019.

\bibitem{raissi2018forward}
Maziar Raissi.
\newblock Forward-backward stochastic neural networks: Deep learning of
  high-dimensional partial differential equations.
\newblock {\em arXiv preprint arXiv:1804.07010}, 2018.

\bibitem{raissi2019physics}
Maziar Raissi, Paris Perdikaris, and George~E Karniadakis.
\newblock Physics-informed neural networks: A deep learning framework for
  solving forward and inverse problems involving nonlinear partial differential
  equations.
\newblock {\em Journal of Computational Physics}, 378:686--707, 2019.

\bibitem{shin2020convergence}
Yeonjong Shin, Jerome Darbon, and George~Em Karniadakis.
\newblock On the convergence of physics informed neural networks for linear
  second-order elliptic and parabolic type pdes.
\newblock {\em arXiv preprint arXiv:2004.01806}, 2020.

\bibitem{sirignano2018dgm}
Justin Sirignano and Konstantinos Spiliopoulos.
\newblock Dgm: A deep learning algorithm for solving partial differential
  equations.
\newblock {\em Journal of computational physics}, 375:1339--1364, 2018.

\bibitem{wang20222}
Chuwei Wang, Shanda Li, Di~He, and Liwei Wang.
\newblock Is \$l{\textasciicircum}2\$ physics informed loss always suitable for
  training physics informed neural network?
\newblock In Alice~H. Oh, Alekh Agarwal, Danielle Belgrave, and Kyunghyun Cho,
  editors, {\em Advances in Neural Information Processing Systems}, 2022.

\bibitem{wang2022tensor}
Yifan Wang, Pengzhan Jin, and Hehu Xie.
\newblock Tensor neural network and its numerical integration.
\newblock {\em arXiv preprint arXiv:2207.02754}, 2022.

\bibitem{wang2022solving}
Yifan Wang, Yangfei Liao, and Hehu Xie.
\newblock Solving schr$\backslash$"$\{$o$\}$ dinger equation using tensor
  neural network.
\newblock {\em arXiv preprint arXiv:2209.12572}, 2022.

\end{thebibliography}

\end{document}